\definecolor{goodgreen}{RGB}{0,104,0} 
\theoremstyle{plain}
\newtheorem{theorem}{Theorem}[section]
\theoremstyle{definition}
\newtheorem{definition}[theorem]{Definition}
\theoremstyle{remark}
\newenvironment{customthm}[1]
  {\innercustomthm}
  {\endinnercustomthm}
\DeclareMathSymbol{\shortminus}{\mathbin}{AMSa}{"39}
\definecolor{darkgreen}{rgb}{0.0, 0.5, 0.0}
\DeclareMathOperator*{\argmax}{arg\,max}
\DeclareMathOperator*{\argmin}{arg\,min}
\begin{document}

\twocolumn[

\icmltitle{Refining Minimax Regret for Unsupervised Environment Design}



\icmlsetsymbol{equal}{*}

\begin{icmlauthorlist}
\icmlauthor{Michael Beukman}{equal,oxeng}
\icmlauthor{Samuel Coward}{equal,oxeng}
\icmlauthor{Michael Matthews}{oxeng}\\
\icmlauthor{Mattie Fellows}{oxeng}
\icmlauthor{Minqi Jiang}{ucl}
\icmlauthor{Michael Dennis}{berkeley}
\icmlauthor{Jakob Foerster}{oxeng}

\end{icmlauthorlist}

\icmlaffiliation{oxeng}{University of Oxford}
\icmlaffiliation{ucl}{University College London}
\icmlaffiliation{berkeley}{UC Berkeley}

\icmlcorrespondingauthor{Michael Beukman}{mbeukman@robots.ox.ac.uk}
\icmlcorrespondingauthor{Samuel Coward}{scoward@robots.ox.ac.uk}

\icmlkeywords{Reinforcement Learning, Unsupervised Environment Design, Minimax Regret}

\vskip 0.3in
]

\newcommand{\perfregret}{\text{Regret}}
\newcommand{\mysetminus}{\Theta'}

\newcommand{\remidi}{ReMiDi\xspace}

\newcommand{\new}[1]{{\color{purple} #1}}
\newcommand{\newsm}[1]{{\textcolor{purple}{#1}}}

\newcommand{\hh}{\tau}
\DeclarePairedDelimiter\abs{\lvert}{\rvert}%
\theoremstyle{plain}
\newtheorem{observation}{Observation}[section]
\theoremstyle{definition}
\theoremstyle{remark}

\printAffiliationsAndNotice{\icmlEqualContribution} 

\begin{abstract}

    In unsupervised environment design, reinforcement learning agents are trained on environment configurations (levels) generated by an adversary that maximises some objective.
    Regret is a commonly used objective that theoretically results in a minimax regret (MMR) policy with desirable robustness guarantees; in particular, the agent's maximum regret is bounded.
    However, once the agent reaches this regret bound on all levels, the adversary will only sample levels where regret cannot be further reduced. Although there may be possible performance improvements to be made outside of these regret-maximising levels, learning stagnates.
    In this work, we introduce \textit{Bayesian level-perfect MMR} (BLP), a refinement of the minimax regret objective that overcomes this limitation.
    We formally show that solving for this objective results in a subset of MMR policies, and that BLP policies act consistently with a Perfect Bayesian policy over all levels.
    We further introduce an algorithm, \textit{ReMiDi}, that results in a BLP policy at convergence.
    We empirically demonstrate that training on levels from a minimax regret adversary causes learning to prematurely stagnate, but that ReMiDi continues learning.
    
\end{abstract}
\section{Introduction}\label{sec:intro}

Unsupervised environment design (UED) is an approach to automatically generate a training curriculum of environments for deep reinforcement learning (RL) agents~\citep{dennis2020Emergent,jiang2021Replayguided}. Regret-based UED trains an adversary to select environment configurations (referred to as \textit{levels}) that maximise the agent's \textit{regret}, i.e., the difference in performance between an optimal policy on that level and the agent. In other words, regret measures how much better a particular agent could perform on a particular level. Empirically, training on these regret-maximising levels has been shown to improve generalisation to out-of-distribution levels in challenging domains~\citep{dennis2020Emergent,jiang2021Replayguided, holder2022Evolving,samvelyan2023Maestro,team2023Humantimescale}. Furthermore, at equilibrium, UED methods theoretically result in a \textit{minimax regret policy}~\citep{dennis2020Emergent,jiang2021Replayguided}, meaning that the policy's worst-case regret is bounded.
\begin{figure}
    \centering
    \includegraphics[width=\linewidth]{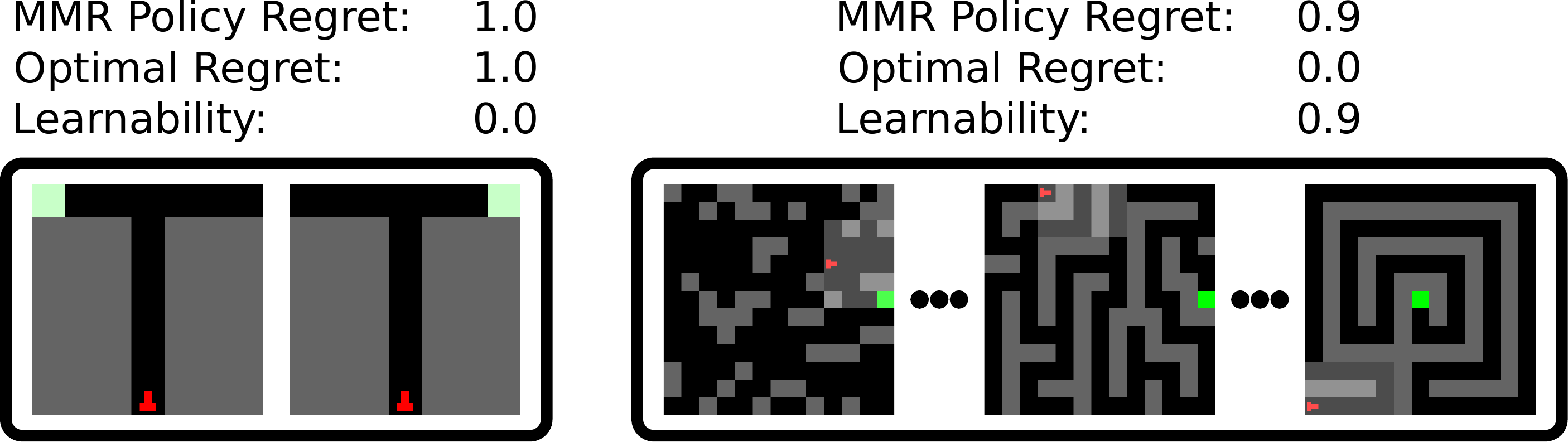}
    \caption{An illustration of the regret stagnation problem of minimax regret that our work addresses. In the T-mazes, the reward for reaching the goal is $1.0$ and $-1.0$ for failing. The reward for the mazes is $0.9$ for reaching the goal, and zero otherwise. Regret-based UED methods gravitate towards sampling high regret environments (T-mazes in this case with a regret of $1.0$), even if the agent cannot improve on these levels. This is despite the existence of non-high-regret levels (the mazes, with regret upper-bounded by $0.9$) on which the agent can still improve.}\label{fig:core_problem}
\end{figure}
Minimax regret (MMR) works well when the agent can simultaneously perform optimally on all levels: at convergence, the MMR policy would achieve zero regret for each level. 
However, this is not possible in all environments~\citep{sukhbaatar2017Intrinsic}.
As an example, consider the T-mazes in \cref{fig:core_problem}: these levels have different optimal behaviours but, due to partial observability, are indistinguishable to the agent. In this case, the agent cannot simultaneously perform optimally on both levels, and therefore suffers some \textit{irreducible regret}. 
Since MMR-based UED methods prioritise sampling the highest regret levels, these two levels will continually be sampled for training---even though they provide nothing more for the agent to learn.

This phenomenon is problematic if we have a subset of levels that (a) are distinguishable from the irreducible regret levels and (b) have lower, but reducible, regret; for instance, the set of all simultaneously solvable mazes in \cref{fig:core_problem} (which have regret of $0.9$ or lower).
A theoretically-sound UED method that implements MMR will converge to sampling each T-maze with 50\% probability, and fail to sample any of the solvable mazes. While this does technically satisfy the MMR objective, there is no guarantee we will achieve a policy that is effective at solving normal mazes, as the agent would only rarely see these, if at all. This shows a weakness of MMR---which we call \emph{regret stagnation}---because we know there exists a policy that obtains optimal regret on T-mazes and normal mazes.

We theoretically address this \textit{regret stagnation problem} by proposing a refinement of the MMR objective, which we call \emph{Bayesian level-perfect MMR} (BLP). Our objective aims to be minimax-regret over non-MMR levels, under the constraint that the policy must act according to the MMR policy in all trajectories that are consistent with MMR levels.
This process is iteratively repeated, each time over a smaller subset of levels. In this way, a BLP policy retains minimax regret guarantees, and iteratively improves its worst-case regret on the remainder of the levels.
We further show that any BLP policy acts consistently with a Perfect Bayesian policy on all levels.
Finally, we develop an algorithm, ReMiDi, that results in a BLP policy at convergence.

Our contributions are as follows:
\begin{enumerate}
    \item We theoretically introduce and characterise the regret stagnation problem in minimax regret UED.
    \item We propose BLP, a refinement of minimax regret for UED, that retains global minimax regret, and additionally obtains minimax regret-like guarantees under trajectories that do not occur in high-regret levels.
    \item We introduce a proof-of-concept algorithm, ReMiDi, that solves our new objective and returns a BLP policy.
    \item We empirically demonstrate that, in settings with high irreducible regret, ReMiDi significantly outperforms standard regret-based UED.
\end{enumerate}
By solving this problem, we empower the use of UED in larger and more open-ended settings, where irreducible regret is likely, as a BLP policy can be robust even outside these (potentially very rare) highest-regret levels.

\section{Background}
\subsection{UPOMDPs}
We consider an underspecified partially-observable Markov decision process~\citep[UPOMDP]{dennis2020Emergent} $\mathcal{M} = \langle A, O, \Theta, S, P_S, P_O, \mathcal{R}, \gamma \rangle$. 
Here $A$ is the action space, $O$ is the observation space, and $S$ is the state space. $\Theta$ is the space of underspecified parameters commonly referred to as \textit{levels}, $P_S: S \times A \times \Theta \to \Delta(S)$\footnote{$\Delta(X)$ is the set of all probability distributions over the set $X$.} is the level-conditional transition distribution. We denote the initial state distribution as $P_0:\Theta \to \Delta(S)$. In the partially observable setting, the agent does not directly observe the state, but an observation variable $o\in O$ that is correlated to the underlying state. $\mathcal{R}:S\times A\rightarrow \mathbb{R}$ is the scalar reward function, we denote instances of reward at time $t$ as $r_t=\mathcal{R}(s_t,a_t)$ and $\gamma$ is the discount factor. Each set of underspecified parameters $\theta \in \Theta$ indexes a particular POMDP called a \textit{level}. In our maze example in \cref{fig:core_problem}, the level determines the location of the goal and obstacles but dynamics such as navigating and the reward function remain shared across all levels.

At time $t$ the agent observes an action-observation history (or trajectory) $\hh_t=\langle o_0, a_0, \dots, o_{t-1}, a_{t-1}, o_t \rangle$ and chooses an action according to a trajectory-conditioned policy $a_t\sim \pi(\hh_t)$. We denote the set of all trajectory-conditioned policies as $\Pi\dot = \{\pi\vert \pi:\mathcal{T}\rightarrow\Delta(A)\}$ where $\mathcal{T}$ denotes the set of all possible trajectories. 
For any level $\theta$, the agent's goal is to maximise the expected discounted return (called \textit{utility}), which we denote as:
\begin{align*}
    U_\theta(\pi)\dot = \mathbb{E}_{\pi,\theta}\left[ \sum_{t=0}^T \gamma^t r_t \right],
\end{align*}
where $\mathbb{E}_{\pi,\theta}$ denotes the expected value on $\theta$ if the agent follows policy $\pi$~\citep{sutton2018reinforcement}.
We denote an optimal policy for level $\theta$ as $\pi^\star_\theta\in \argmax_{\pi'} U_\theta(\pi')$. 
Finally, similar to prior work~\citep{dennis2020Emergent}, for all proofs, we restrict our attention to finite and discrete UPOMDPs.

\subsection{Unsupervised Environment Design} \label{sec:minimax_regret_game}
Unsupervised Environment Design (UED) is posed as a two-player game, where an adversary $\Lambda$ selects levels $\theta$ for an agent $\pi$ to train on~\citep{dennis2020Emergent}. The adversary's goal is to choose levels that maximise some utility function, e.g., a constant utility for each level corresponds to domain randomisation~\citep{tobin2017Domain}. 
One commonly-used objective is to maximise the agent's regret~\citep{savage1951Theory,Bell82,Loomes82}. Formally, the \textit{regret} of policy $\pi$ with respect to an optimal policy $\pi_\theta^\star$ on a level $\theta$ is equal to how much better $\pi_\theta^\star$ performs than $\pi$ on $\theta$, $\perfregret_\theta(\pi) \dot = U_\theta(\pi_\theta^\star) - U_\theta(\pi)$.

If regret is used as the payoff, at Nash equilibrium of this two-player zero-sum game, the policy satisfies minimax regret~\citep{osborne2004Introduction,dennis2020Emergent}:\footnote{A Nash equilibrium always exists in finite games~\citep{nash1950Equilibrium}.}
\begin{equation}\label{eq:minimaxregret}
    \pi_\text{MMR} \in \Pi^\star_\text{MMR}\dot = \argmin_{\pi} \{\max_{\theta \in \Theta} \{ \perfregret_\theta(\pi) \} \}.
\end{equation}

Constraining policies to the set of MMR policies $\Pi^\star_\text{MMR}$ has several advantages: when deploying our policy, our regret can never be higher than the minimax regret bound, so the policy has a certain degree of robustness. Using minimax regret also results in an adaptive curriculum that increases in complexity over time, leading to the agent learning more efficiently~\citep{dennis2020Emergent,holder2022Evolving}.

Further, choosing levels based on maximising regret avoids sampling levels that are too easy (as the agent already performs well on these) or impossible (where the optimal policy also does poorly). This is in contrast to standard minimax, which tends to choose impossible levels that minimise the agent's performance~\citep{pinto2017Robust,dennis2020Emergent}.

A drawback of using minimax regret is the assumption of having access to the optimal policy $\pi^\star_\theta$ per level, which is generally unavailable. To circumvent this issue, most methods approximate regret in practice. We discuss several commonly-used heuristics in \cref{app:regret_metrics}. A more serious issue with using minimax regret in isolation is that there is no formal method to choose between policies in $\Pi^\star_\text{MMR}$. Typically it is chance and initialisation that determines the policy an algorithm converges to. While all minimax regret policies protect against the highest-regret outcomes, these events may be rare and there may be significant differences in the utility of policies in $\Pi^\star_\text{MMR}$ in more commonly encountered levels. We discuss this issue further in \cref{sec:limits_of_regret}. Finally, we refer the interested reader to \cref{app:additional_bg} for additional background on decision theory and UED.

\section{The Limits of Minimax Regret}\label{sec:limits_of_regret}
To elucidate the issues with using minimax regret in isolation, we analyse the set of minimax regret policies $\Pi^\star_\text{MMR}$ introduced in \cref{sec:minimax_regret_game}. For any $\pi_\text{MMR}\in \Pi^\star_\text{MMR}$ and $\vartheta \in \Theta$, it trivially holds that: 
\begin{align*}
\operatorname{Regret}_\vartheta(\pi_\text{MMR}) \le \min_{\pi'} \left\{\max_{\theta \in \Theta} \left\{{\operatorname{Regret}_\theta(\pi')} \right\}\right\}.
\end{align*}
However, it is unclear whether all policies in $\Pi^\star_\text{MMR}$ are equally desirable across all levels. 
In the worst case, the minimax-regret game will converge to an agent policy that only performs as well as this bound, \textit{even if further improvement is possible} on other (non-minimax regret) levels. In addition, the adversary's distribution will not change at such Nash equilibria, by definition. Thus, at equilibrium, the agent will not be presented with levels outside the support of $\Lambda$ and as such will not have the opportunity to improve further---despite the possible existence of other MMR policies with lower regret outside the support of $\Lambda$.

This observation, and the concrete example in \cref{fig:core_problem} demonstrate that minimax regret does not always correspond to \emph{learnability}: there could exist UPOMDPs with high regret on a subset of levels on which an agent is optimal (given the partial observability constraints), and low regret on levels in which it can still improve. \textit{Our key insight is that optimising solely for minimax regret can result in the agent's learning to stop prematurely, preventing further improvement across levels outside the support of MMR levels}. We summarise this \textit{regret stagnation problem} of minimax regret as follows:
\begin{enumerate}
    \item The minimax regret game is indifferent to which MMR policy is achieved on convergence; and
    \item Upon convergence to a policy in $\Pi^\star_\text{MMR}$, no improvements occur on levels outside the support of $\Lambda$.
\end{enumerate}

\section{Refining Minimax Regret}\label{sec:refinement}
Having described the regret stagnation problem of minimax regret, we now introduce a new objective to address it.
Concretely, we propose \textit{Bayesian level-perfect Minimax Regret} (BLP), our refinement of the MMR decision rule applied to UED. 
To describe this objective succinctly, we first introduce the notion of a \textit{realisable} trajectory, and the \textit{refined MMR game}. The refined game fixes a policy $\pi$ and set of levels $\Theta'$, and restricts the solution to act consistent with $\pi$ in all trajectories possible given $\pi$ and $\Theta'$, where behaviour for other trajectories can be chosen arbitrarily.

\begin{definition}\label{def:realisable}\textbf{(Realisable Trajectory):}
    For a set $\Theta'$ and policy $\pi$, $\mathcal{T}_{\pi}(\Theta')$ denotes the set of all trajectories that are possible by following $\pi$ on any $\theta \in \Theta'$. We call a trajectory $\hh$ realisable under $\pi$ and $\Theta'$ iff $\hh \in \mathcal{T}_{\pi}(\Theta')$.

\end{definition}

\begin{definition}\textbf{Refined Minimax Regret game:}
    Given a UPOMDP with level space $\Theta$, suppose we have some policy $\pi$ and some subset of levels  $\Theta' \subseteq \Theta$. 
    We introduce the \textbf{refined minimax regret game under $\pi$ and $\Theta'$}, a two-player zero-sum game between an agent and adversary where:
    \begin{itemize}
        \item the agent's strategy set is all policies of the form
        $$
            \pi'(a | \hh) = \begin{cases}
                \pi(a | \hh) \text{ if } \hh \in \mathcal{T}_{\pi}(\mysetminus) \\
                \bar{\pi}(a | \hh) \text{ otherwise},
            \end{cases}
        $$
        where $\bar{\pi}$ is an arbitrary policy; 
        \item the adversary's strategy set is $\overline{\Theta} \dot = \Theta \setminus \Theta'$;
        \item the adversary's payoff is $\perfregret_\theta(\pi')$.
    \end{itemize}
    In other words, $\pi'$ represents the set of policies that perform identically to $\pi$ in any trajectory possible under $\pi$ and $\mysetminus$.
    At Nash equilibrium of the refined game, the agent will converge to a policy that performs identically to $\pi$ under all levels in $\mysetminus$ (by definition), but otherwise will perform minimax regret optimally over $\overline{\Theta}$ with respect to these constraints. The adversary will converge onto a minimax regret equilibrium distribution with support only on levels in $\overline{\Theta}$.

    We note that the solution to the refined game is not guaranteed to obtain absolute best worst-case regret over $\overline{\Theta}$, since it is constrained to act according to $\pi$ when observing certain trajectories. However, we can guarantee that the Nash solution must obtain optimal worst-case regret compared to all other policies that also have this constraint.
\end{definition}

We now state and prove an important property of the equilibrium policy in the refined game: it must monotonically improve upon $\pi$ in the highest-regret $\theta \in \Theta \setminus \Theta'$, while maintaining $\pi$'s regret over $\mysetminus$.
\begin{theorem}
\label{thm:mwci}
Suppose we have a UPOMDP with level space $\Theta$. Let $\pi$ be some policy and $\Theta' \subseteq \Theta$ be some subset of levels. Let $(\pi', \Lambda')$ denote a policy and adversary at Nash equilibrium for the refined minimax regret game under $\pi$ and $\Theta'$. Then, (a) for all $\theta \in \mysetminus$,
$\perfregret_\theta(\pi') = \perfregret_\theta(\pi)$; and (b) we have,
$$
    \max_{\theta \in \Theta \setminus \Theta'} \left\{\perfregret_\theta(\pi')\right\} \le \max_{\theta \in \Theta \setminus \Theta'} \left\{\perfregret_\theta(\pi))\right\}.
$$

\end{theorem}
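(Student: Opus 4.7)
The plan is to treat parts (a) and (b) with quite different arguments: (a) reduces to showing that $\pi'$ and $\pi$ induce identical trajectory distributions on every $\theta \in \mysetminus$, while (b) drops out of chaining the two Nash best-response conditions of the refined game, using the fact that $\pi$ is itself a feasible agent strategy.

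For (a), I would fix $\theta \in \mysetminus$ and prove by induction on trajectory length $t$ that the distributions over length-$t$ trajectories under $(\pi, \theta)$ and under $(\pi', \theta)$ coincide as measures on $\mathcal{T}$. The base case $t=0$ is immediate since the distribution over $o_0$ depends only on $P_0(\theta)$ and $P_O$, which are policy-independent. For the inductive step, any $\tau_t$ with positive probability under $(\pi, \theta)$ lies in $\mathcal{T}_\pi(\mysetminus)$ by definition, so the refined-game constraint gives $\pi'(\cdot \mid \tau_t) = \pi(\cdot \mid \tau_t)$; combining the inductive hypothesis with the shared transition kernel $P_S$ and observation kernel then yields identical length-$(t{+}1)$ distributions. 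Equal trajectory distributions imply $U_\theta(\pi') = U_\theta(\pi)$, and since $\pi^\star_\theta$ is independent of the agent, $\perfregret_\theta(\pi') = U_\theta(\pi^\star_\theta) - U_\theta(\pi') = \perfregret_\theta(\pi)$.

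For (b), the first observation is that $\pi$ itself belongs to the agent's strategy set in the refined game: taking $\bar{\pi} = \pi$ in the piecewise definition trivially yields $\pi' \equiv \pi$, so $\pi$ is a valid comparison strategy. At the Nash equilibrium $(\pi', \Lambda')$, I would then use two best-response facts: (i) $\Lambda'$ best-responds to $\pi'$, so maximising a linear functional on the simplex supported on $\overline{\Theta}$ gives $\mathbb{E}_{\theta \sim \Lambda'}[\perfregret_\theta(\pi')] = \max_{\theta \in \overline{\Theta}} \perfregret_\theta(\pi')$; and (ii) $\pi'$ best-responds to $\Lambda'$ within the feasible class, so $\mathbb{E}_{\theta \sim \Lambda'}[\perfregret_\theta(\pi')] \le \mathbb{E}_{\theta \sim \Lambda'}[\perfregret_\theta(\pi)]$. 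Finally, since $\Lambda'$ is supported in $\overline{\Theta}$, $\mathbb{E}_{\theta \sim \Lambda'}[\perfregret_\theta(\pi)] \le \max_{\theta \in \overline{\Theta}} \perfregret_\theta(\pi)$. Chaining these three inequalities proves (b).

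I expect the main obstacle to be the bookkeeping in (a): phrasing the inductive invariant cleanly so that trajectories of zero probability under $\pi$ are handled (they also have zero probability under $\pi'$, so the measures agree), and being precise that $\pi'$ may behave arbitrarily on trajectories outside $\mathcal{T}_\pi(\mysetminus)$ yet such trajectories are never reached on $\theta \in \mysetminus$. A minor additional subtlety in (b) is that the statement presupposes $\overline{\Theta} \neq \emptyset$ so that the refined Nash equilibrium is well-defined; if $\overline{\Theta} = \emptyset$, both maxima are vacuous and (b) holds trivially.
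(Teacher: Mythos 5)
Your proposal is correct and follows essentially the same route as the paper's proof: part (a) rests on $\pi'$ agreeing with $\pi$ on every trajectory realisable under $\pi$ and $\Theta'$ (hence inducing the same trajectory distribution and utility on each $\theta \in \Theta'$), and part (b) rests on $\pi$ being a feasible agent strategy in the refined game so that the equilibrium policy cannot have worse worst-case regret over $\overline{\Theta}$. Your version simply fills in the details the paper leaves implicit --- the induction on trajectory length for (a) and the explicit chain of best-response inequalities through $\mathbb{E}_{\theta \sim \Lambda'}[\perfregret_\theta(\cdot)]$ for (b) --- both of which are sound.
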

\begin{proof}
    (a) By definition, $\pi'$ must act according to $\pi$ on all trajectories possible under $\pi$ and $\mysetminus$. Therefore, the performance (and thus regret) of $\pi'$ must be identical to $\pi$ for all levels in $\mysetminus$.
    (b) In the refined MMR game, $\pi$ is trivially in the agent's strategy set. So regardless of what level the adversary plays, the agent can always play $\pi$. Thus the agent's best response can never be worse than $\pi$.
\end{proof}

\cref{thrm:minimax_refinement_theorem} next shows the benefits of iteratively refining an initial minimax regret policy.
\newcommand{\mmrrefinementtheorem}[1]{
Let $\langle \pi_1$, $\Lambda_1 \rangle$ be in Nash equilibrium of the minimax regret game.
Let $\langle \pi_i, \Lambda_i \rangle$ with $1 < i$ denote the Nash equilibrium solution to the refined minimax regret game under $\pi_{i-1}$ and $\Theta_i' = \bigcup_{j = 1}^{i-1} \operatorname{Supp}(\Lambda_j)$.#1
Then, for all $i \geq 1$, (a) $\pi_{i}$ is minimax regret and (b) we have
\begin{align}\label{eq:better}
    \max_{\theta \in \Theta \setminus \Theta_i'} \left\{\perfregret_\theta(\pi_{i})\right\} \le \max_{\theta \in \Theta \setminus \Theta_i'} \left\{\perfregret_\theta(\pi_{i-1}))\right\}.
\end{align}
Finally, (c) for $1 \leq j < i$ and $\theta \in \text{Supp}(\Lambda_j)$, $\perfregret_\theta(\pi_i) = \perfregret_\theta(\pi_j)$.
}
\begin{theorem}\label{thrm:minimax_refinement_theorem} \textbf{(Minimax Regret Refinement Theorem):}

\mmrrefinementtheorem{\footnote{$\operatorname{Supp}(\Lambda)$ denotes the support of $\Lambda$, i.e., the set of all environments that it samples with nonzero probability.}}
In other words, iteratively refining a minimax regret policy (a) retains minimax regret guarantees; (b) monotonically improves worst-case regret on the set of levels not already sampled by any adversary; and (c) retains regret of previous refinements on previous adversaries.
\end{theorem}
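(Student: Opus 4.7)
The plan is to prove all three claims simultaneously by strong induction on $i$, using \cref{thm:mwci} as the workhorse at each step. The base case $i=1$ is immediate: $\pi_1$ is minimax regret by definition (so (a) holds), and (b) and (c) are vacuous since there is no earlier iterate. For the inductive step, I will fix $i \geq 2$, assume (a)--(c) hold for all indices $1,\dots,i-1$, and apply \cref{thm:mwci} to the pair $(\pi_{i-1}, \Theta_i')$, whose refined-game Nash equilibrium is $(\pi_i, \Lambda_i)$ by construction.

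For (b), I would simply invoke \cref{thm:mwci}(b) with $\pi = \pi_{i-1}$ and $\Theta' = \Theta_i'$, which gives \eqref{eq:better} directly. For (c), the key observation is that the nested structure $\Theta_2' \subseteq \Theta_3' \subseteq \cdots$ implies $\mathrm{Supp}(\Lambda_j) \subseteq \Theta_k'$ for every $k > j$. I would then chain \cref{thm:mwci}(a) across the refinement steps: for any $\theta \in \mathrm{Supp}(\Lambda_j)$, $\theta \in \Theta_{j+1}'$ so $\mathrm{Regret}_\theta(\pi_{j+1}) = \mathrm{Regret}_\theta(\pi_j)$; similarly $\theta \in \Theta_{j+2}'$ gives $\mathrm{Regret}_\theta(\pi_{j+2}) = \mathrm{Regret}_\theta(\pi_{j+1})$; and so on up to index $i$. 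Concatenating these equalities yields $\mathrm{Regret}_\theta(\pi_i) = \mathrm{Regret}_\theta(\pi_j)$, which is (c).

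Finally, for (a), I would let $R^\star \doteq \max_{\theta \in \Theta} \mathrm{Regret}_\theta(\pi_1)$ denote the minimax regret value. It suffices to show $\max_{\theta \in \Theta} \mathrm{Regret}_\theta(\pi_i) \leq R^\star$, since no policy can do better by definition of $R^\star$. I would partition $\Theta = \Theta_i' \cup (\Theta \setminus \Theta_i')$ and bound each piece. On $\Theta_i'$, every $\theta$ lies in $\mathrm{Supp}(\Lambda_j)$ for some $j < i$, so part (c) and the inductive hypothesis for (a) at index $j$ give $\mathrm{Regret}_\theta(\pi_i) = \mathrm{Regret}_\theta(\pi_j) \leq R^\star$. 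On $\Theta \setminus \Theta_i'$, part (b) combined with the inductive hypothesis that $\pi_{i-1}$ is minimax regret yields
\begin{equation*}
    \max_{\theta \in \Theta \setminus \Theta_i'} \mathrm{Regret}_\theta(\pi_i) \leq \max_{\theta \in \Theta \setminus \Theta_i'} \mathrm{Regret}_\theta(\pi_{i-1}) \leq R^\star.
\end{equation*}
Taking the maximum over both pieces closes the induction.

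The main obstacle I anticipate is bookkeeping rather than mathematical depth: one must carefully track that (a)--(c) feed into each other correctly in the induction, in particular that (a) at step $i$ requires (c) at step $i$ (which needs (a) only at earlier steps), so the induction is well-founded. A secondary subtlety is verifying that $\mathrm{Supp}(\Lambda_j) \subseteq \Theta_k'$ for all $k > j$ follows from the definition $\Theta_k' = \bigcup_{\ell < k} \mathrm{Supp}(\Lambda_\ell)$, which legitimises the chained application of \cref{thm:mwci}(a) used in proving (c).
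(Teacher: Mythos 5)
Your proposal is correct and follows essentially the same route as the paper's own proof: induction on $i$ with \cref{thm:mwci} doing all the work, where (b) is a direct instantiation of \cref{thm:mwci}(b), (c) comes from chaining \cref{thm:mwci}(a) through the nested sets $\Theta_j'$, and (a) follows by splitting $\Theta$ into $\Theta_i'$ (controlled via the previous iterates) and its complement (controlled via (b)). Your write-up is in fact slightly more explicit than the paper's, which compresses the partition argument for (a) into the remark that $\pi_{k+1}$ ``cannot perform worse over any previously-sampled level'' while improving worst-case regret elsewhere.
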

This result holds due to inductively applying \cref{thm:mwci}. At every iteration of the refined game, we must perform equally on all previously chosen levels, by definition. On the non-chosen levels, we must either maintain or improve the regret bound of the previous step, by \cref{thm:mwci}. For space reasons, the formal proof is in \cref{sec:catchall}.

We have now defined the refined game, and shown that iteratively solving it retains minimax regret guarantees \textit{and} monotonically improves worst-case regret in non-MMR levels. We next define the solution concept we propose to use instead of standard minimax regret, which we call \textit{Bayesian level-perfect Minimax Regret} (BLP MMR). Intuitively, a BLP policy solves all of the refined games until all levels have been sampled.
\begin{definition} \label{def:level_perfect}
\textbf{Bayesian level-perfect Minimax Regret:}
Let $\langle \pi_1, \Lambda_1 \rangle$ be in Nash equilibrium of the minimax regret game.
Let $\langle \pi_i, \Lambda_i \rangle$, $1 < i$ denote the solution to the refined game under $\pi_{i-1}$ and $\bigcup_{j = 1}^{i-1} \operatorname{Supp}(\Lambda_j)$.
Policy $\pi_j$ is a \textit{Bayesian level-perfect minimax regret policy} if $\bigcup_{k=1}^{j} \operatorname{Supp}(\Lambda_k) = \Theta$.\footnote{We refer the interested reader to \cref{app:temporal_inconsistency}, where we discuss simpler, but ultimately flawed alternatives to BLP.}
\end{definition}

Next, \cref{thrm:bayes_perf} shows that BLP is similar to a Bayes perfect refinement of minimax regret, in that it acts consistently with a Perfect Bayesian policy under a minimax regret prior over levels. Its proof can be found in \cref{sec:proof:bayes_perf}.
\newcommand{\bayesperfecttheorem}{
A Bayesian level-perfect minimax regret policy $\pi'$ acts consistently with a Perfect Bayesian policy on all realisable trajectories under $\pi'$ and $\Theta$. 
}
\begin{theorem}\label{thrm:bayes_perf}
    \bayesperfecttheorem
\end{theorem}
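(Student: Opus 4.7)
The plan is to reduce the claim to two standard facts: (i) in a two-player zero-sum game, a Nash equilibrium strategy is a best response to the opponent's equilibrium mixed strategy; and (ii) a best response to a mixture $\Lambda$ over levels in a POMDP is Bayes-optimal, i.e., at every reachable information state it maximises expected utility under the $\Lambda$-posterior (note that minimising expected regret and maximising expected utility coincide up to a $\pi$-independent constant). Combined, these facts imply that each $\pi_i$ in \cref{def:level_perfect} acts as a Perfect Bayesian policy with prior $\Lambda_i$ on trajectories that only become reachable at refinement $i$. I would then show, by induction on $i = 1, \ldots, N$, that the final BLP policy $\pi' = \pi_N$ inherits this Bayes-optimality at every realisable trajectory.

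For the inductive step, consider $\hh$ first realisable at refinement $i$, meaning $\hh \in \mathcal{T}_{\pi_i}(\operatorname{Supp}(\Lambda_i))$ but $\hh \notin \mathcal{T}_{\pi_{i-1}}(\Theta'_i)$. Because $\pi_i$ is a Nash strategy in the refined game, it best-responds to $\Lambda_i$ subject only to matching $\pi_{i-1}$ on $\mathcal{T}_{\pi_{i-1}}(\Theta'_i)$; since $\hh$ lies outside this constrained set, the best-response requirement at $\hh$ reduces to pointwise maximisation of expected utility under $\Pr(\theta \mid \hh, \Lambda_i)$, which is precisely the action a Perfect Bayesian policy with prior $\Lambda_i$ would take. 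Successive refinements then force $\pi_j(\hh) = \pi_i(\hh)$ for all $j > i$ (since $\hh \in \mathcal{T}_{\pi_j}(\Theta'_{j+1})$ by an easy sub-induction on prefixes), so $\pi'$ also takes this Bayes-optimal action at $\hh$.

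To obtain a single Perfect Bayesian policy over $\Theta$, I would assign at each realisable $\hh$ the belief $\mu(\cdot \mid \hh)$ equal to the $\Lambda_{i^*(\hh)}$-posterior, where $i^*(\hh)$ is the smallest refinement at which $\hh$ becomes realisable. These beliefs are Bayes-consistent with a hierarchical prior that begins at $\Lambda_1$ and, at each off-equilibrium-path information set, falls back to the next $\Lambda_i$ — the standard off-path belief structure permitted in a Perfect Bayesian equilibrium. The terminating condition $\bigcup_i \operatorname{Supp}(\Lambda_i) = \Theta$ ensures every realisable trajectory receives a defined belief, under which $\pi'$ is Bayes-optimal by the inductive step.

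The hard part is verifying that the decomposition by $i^*(\hh)$ is well-defined: specifically, that any $\hh$ first realisable at refinement $i$ truly has probability zero under $\pi'$ on every level in $\Theta'_i$. This requires chaining the refined-game constraints — using $\pi' = \pi_k$ on $\mathcal{T}_{\pi_k}(\Theta'_{k+1})$ for each $k < i$ — to conclude that the trajectories $\pi'$ realises on $\Theta'_i$ coincide exactly with $\mathcal{T}_{\pi_{i-1}}(\Theta'_i)$, excluding $\hh$ by minimality of $i$. A secondary subtlety is checking that the policy constraint from earlier refinements is genuinely vacuous on newly-realisable trajectories, so that the best-response-equals-Bayes-optimality implication applies without residual coupling to the constrained region; this should follow from the fact that constrained information sets form a strict subset of reachable ones in the refined game.
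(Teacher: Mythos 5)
Your treatment of realisable trajectories follows essentially the same route as the paper: index each realisable $\hh$ by the first refinement $i^*(\hh)$ at which it is reached, take the $\Lambda_{i^*(\hh)}$-posterior as the belief, and derive sequential rationality from the fact that a Nash strategy of the refined game is a best response to $\Lambda_{i^*(\hh)}$ (the paper phrases this as a contradiction with equilibrium rather than invoking best-response-implies-Bayes-optimality, but the content is the same, including the observation that regret and utility differ by a $\pi$-independent constant per level). Your worry about whether $i^*$ gives a consistent belief system is in fact the easy part: trajectories first realisable at stage $i>1$ have probability zero under $(\pi_1,\Lambda_1)$, so a Perfect Bayesian policy permits an arbitrary belief there, and the hierarchical fallback to $\Lambda_{i}$ is admissible.

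The genuine gap is that you never handle \emph{non-realisable} trajectories, i.e.\ information sets $\Theta_\hh$ for histories $\hh$ that some level in $\Theta$ can generate but that $\pi'$ reaches with probability zero on every level, at every refinement. The theorem asserts that $\pi'$ agrees on realisable trajectories with \emph{some} Perfect Bayesian policy, and any such witness must be sequentially rational with respect to some belief at \emph{every} information set, including the non-realisable ones. Your construction defines beliefs and verifies optimality only on realisable trajectories, so the object you build is not yet a Perfect Bayesian policy; the condition $\bigcup_i \operatorname{Supp}(\Lambda_i) = \Theta$ guarantees every \emph{level} is eventually sampled, not that every \emph{trajectory} is eventually realised (a level can be sampled while $\pi'$ still avoids some of its branches). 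The paper closes this by constructing a modified witness $\pi_{\text{bayes}}$: wherever $\pi'$ is strictly dominated on $\Theta_\hh$ for a non-realisable $\hh$, replace its continuation by a weakly dominating, non-strictly-dominated policy; a non-strictly-dominated strategy is a best response to some belief (Lemma 60.1 of Osborne and Rubinstein), and the modification does not change behaviour on any realisable trajectory, so $\pi'$ still agrees with $\pi_{\text{bayes}}$ where the theorem requires it. You need this step, or something equivalent, to finish the argument.
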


\section{\remidi}
Having introduced our refinement of minimax regret optimality, we now introduce our proof-of-concept method \textit{\textbf{Re}fining \textbf{Mi}nimax Regret \textbf{Di}stributions} (ReMiDi, see \cref{alg:refinement_alg}) that learns a \textit{Bayesian level-perfect minimax regret policy}. \cref{fig:mainfig} illustrates how our approach works.

\begin{algorithm}[h]
    \caption{\textbf{Re}fining \textbf{Mi}nimax Regret \textbf{Di}stributions}\label{alg:refinement_alg}
    \begin{algorithmic}[1]
        \REQUIRE Level space $\Theta$
        \STATE Initialise agent $\mathcal{S} = \{\}$ and adversary buffers $\mathcal{A} = \{\}$
        \STATE $i \gets 1$
        \WHILE {$\bigcup_{j=1}^{i} \operatorname{Supp}(\Lambda_j) \neq \Theta$}
            \STATE Initialise adversary $\Lambda_i$ and policy $\pi_i$.
            \WHILE{$(\pi_i, \Lambda_i)$ is not at Nash equilibrium} \label{alg:line:nash}
                \STATE Sample level $\theta \sim \Lambda_i$
                \STATE $\pi' \gets \texttt{Combine}(\mathcal{S} \cup \{ \pi_i \}, \mathcal{A})$
                \STATE Sample trajectory $\hh$ using $\pi'$ and $\theta$.

                \IF{$\hh \not \in \mathcal{T}_\pi( \cup_{j=1}^{i-1} \operatorname{Supp}(\Lambda_j))$}
                    \STATE Update $\pi_i$ and $\Lambda_i$ using $\theta$ and $\hh$
                \ENDIF
                
            \ENDWHILE
            \STATE $\mathcal{S} \gets \mathcal{S} \cup \{\pi_i\}$
            \STATE $\mathcal{A} \gets \mathcal{A} \cup \{\Lambda_i\}$
            \STATE $i \gets i + 1$
        \ENDWHILE
        \STATE \textbf{return} $\texttt{Combine}(\mathcal{S}, \mathcal{A})$
    \end{algorithmic}
\end{algorithm}
\begin{figure*}[h]
    \centering
    \includegraphics[width=1\linewidth]{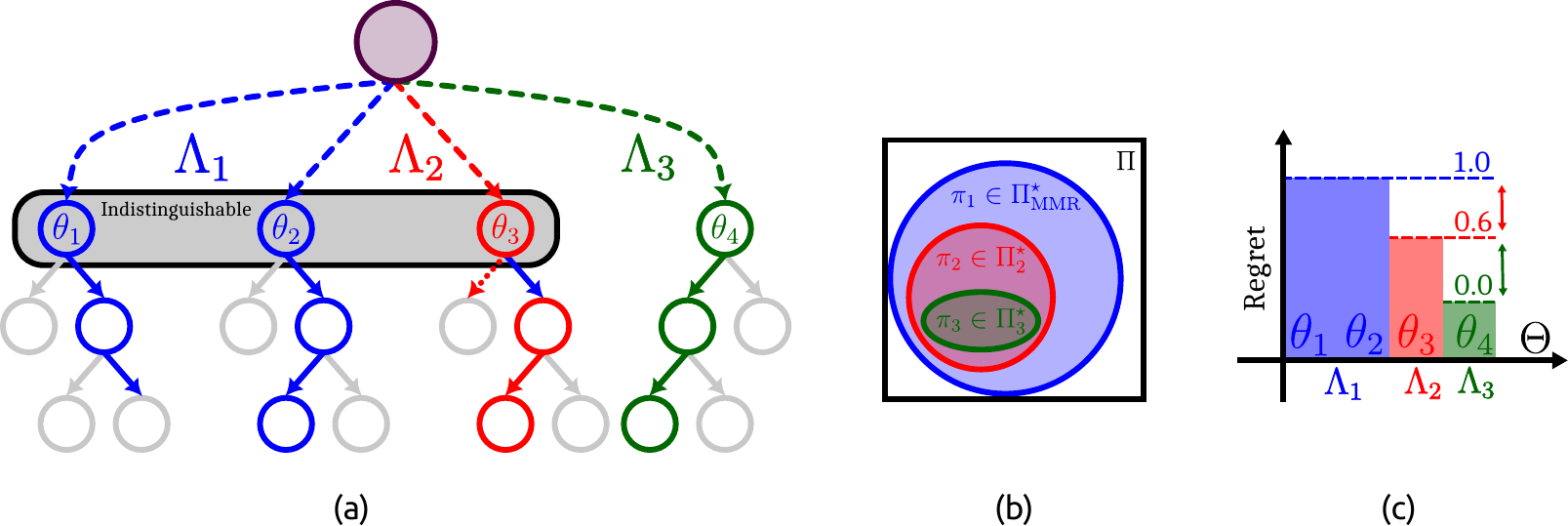}
    \caption{
        The BLP solution concept iteratively restricts the sets of policies by altering behaviour only in certain trajectories.
        \textbf{(a)} Each node corresponds to a level-trajectory pair, and the root node indicates that the adversary samples some level $\theta$.
        MMR results in adversary $\textcolor{blue}{\Lambda_1}$ and policy $\textcolor{blue}{\pi_1}$, reaching the nodes in \textcolor{blue}{blue}.
        MMR would terminate at step 1, but we instead refine our policy further. In step 2, we learn $\textcolor{red}{\Lambda_2}$ and $\textcolor{red}{\pi_2}$. 
        Following $\textcolor{blue}{\pi_1}$ in $\textcolor{red}{\theta_3}$ leads to a trajectory that never happens under $\textcolor{blue}{\Lambda_1}$ and $\textcolor{blue}{\pi_1}$.
        In step 3, we fill in behaviour for $\textcolor{goodgreen}{\theta_4}$, as these trajectories are never reached under any of the previous MMR adversaries. We terminate after all environments have been sampled by an adversary.
        \textbf{(b)} Iterative refinement reduces the set of policies we consider, improving upon the initial MMR policy $\textcolor{blue}{\pi_1}$.
        \textbf{(c)} If we have a minimax regret adversary $\textcolor{blue}{\Lambda_1}$, we are only guaranteed that the regret on all other levels must be at or below the regret of levels in the support of $\textcolor{blue}{\Lambda_1}$ (indicated by the dashed blue line). Refining our policy improves the bound on all levels except those sampled by $\textcolor{blue}{\Lambda_1}$. We iterate this process until all levels have been sampled, monotonically improving the regret bound on all non-previously-sampled levels.
        }\label{fig:mainfig}
\end{figure*}

In \cref{alg:refinement_alg}, $\texttt{Combine}(\mathcal{S}, \mathcal{A})$ returns $\pi_{n}'$, with $n = |\mathcal{S}|$, which is constructed as follows, with $\pi_1' = \pi_1$ being minimax regret:

{
$$
    \pi_n'(a | \hh) = \begin{cases}
        \pi'_{n-1}(a | \hh) &\text{if } \hh \in \mathcal{T}_{\pi'_{n-1}}( \bigcup_{i=1}^{n-1}\operatorname{Supp}(\Lambda_i)))\\
        \pi_{n}(a | \hh) &\text{otherwise.}\\
    \end{cases}
$$
}

\textbf{Outer Loop.} \cref{alg:refinement_alg} is a direct implementation of the successive refined MMR games. Thus, at convergence, it will return a BLP policy. The outer loop continues until all levels have been sampled by an adversary. In practice, this is both infeasible and excessive. Thus, one may choose to only compute a fixed number of outer iterations. 
Lines 9-11 ensure that the current adversary only contains levels with trajectories inconsistent with any previous adversary.

\textbf{Checking for convergence in the inner loop.} Line 5 requires convergence to Nash equilibrium, which is not guaranteed to occur in practice; even if equilibrium is reached, determining when this has happened is also non-trivial. One way to approximately determine convergence is to measure the expected regret over the adversary, and if this has plateaued, we can terminate the optimisation and continue to the next level. We use another, simpler, technique of having a fixed number of timesteps that each adversary is trained for, and assuming that (approximate) convergence is reached after training for this number of timesteps. 

\newpage
\textbf{Choice of adversary.}
\cref{alg:refinement_alg} is agnostic to the choice of the adversary; it could be an antagonist, level generator pair as used in PAIRED~\citep{dennis2020Emergent}, or a curated level buffer (e.g., PLR~\citep{jiang2020Prioritized}). 
In the case of PAIRED, only one antagonist is required for all generators.

\section{Experiments}
\subsection{Experimental Setup}
In this section, we empirically demonstrate that the problems identified in \cref{sec:limits_of_regret} do occur, and that \remidi alleviates these issues. 
First, in \cref{sec:exp_toy_settings}, we illustrate some of the failure cases of ideal UED in a simple tabular setting. 
Next, in \cref{sec:exp_maze}, we experiment in the canonical Minigrid domain.
In \cref{sec:lever}, we consider a different setting where regret-based UED results in a policy that performs poorly over a large subset of levels. Finally, we evaluate on a robotics task in \cref{sec:brax}.
To compare against an ideal version of UED, we use perfect regret as our score function. All plots show the mean and standard deviation over 10 seeds, whereas the tabular experiments use 50.\footnote{We publicly release our code at \url{https://github.com/Michael-Beukman/ReMiDi}.}

For the latter experiments, we compare against Robust PLR~\citep[$\text{PLR}^\perp$]{jiang2021Replayguided}, which is based on curating randomly-generated levels into a buffer of high-regret levels. At every step, the agent is either trained on a sample of levels from the buffer, or evaluated on a set of randomly-generated levels. These randomly generated levels replace existing levels in the buffer that have lower regret scores. In robust PLR, the agent does not train on randomly generated levels. Our second baseline is a minimax return adversary, which represents a large class of robust RL and worst-case methods~\citep{iyengar2005Robust,pinto2017Robust}. We implement this baseline as PLR$^\perp$ with a different score function: the negative of the agent's return. The buffer therefore contains levels that the agent obtains a low return on, and implicitly optimises agent performance on the worst-case levels.
Our ReMiDi implementation maintains multiple buffers, and we perform standard $\text{PLR}^\perp$ on the first buffer for a certain number of iterations. 
We then perform $\text{PLR}^\perp$ again, but reject levels that have complete trajectory overlap with levels in a previous buffer. 
Instead of explicitly maintaining multiple policies, we have a single policy that we update only on the parts of trajectories that are distinguishable from levels in previous buffers, approximately maintaining performance on previous adversaries.
\cref{alg:refinement_alg} assumes knowledge of whether a trajectory is possible given a policy and a set of levels, and we can compute this exactly in each environment.
We use JaxUED~\citep{coward2024JaxUED} for our experiments, and \cref{app:experimental_details} contains more implementation details.

\subsection{Exact Settings}\label{sec:exp_toy_settings}

We consider a one-step tabular game, where we have a set of $N$ levels $\theta_1, \dots, \theta_N$. Each level $i$ corresponds to a particular initial observation $\hh_i$, such that the same observation may be shared by two different environments. Each level also has an associated reward for each action $a_j, 1 \leq j \leq M$.

We model the adversary as a $N$-arm bandit, implemented using tabular Actor-Critic. Each action corresponds to a different level $\theta_i$. 
For \remidi, we have a sequence of adversaries, each selecting levels where the observations are disjoint with any previous adversary. 
In both cases, the agent is also a tabular Actor-Critic policy, with different action choices for each observation (equivalent to a trajectory) $\hh$.

\subsubsection{When Minimax Regret is Sufficient}
In \cref{fig:exp:toy:ued_works:standard_ued}, we first consider a case where MMR has none of the problems discussed in \cref{sec:limits_of_regret}. Here, each $\theta$ has a unique initial observation $\hh$, thus the level can be deduced solely from this observation. Minimax regret succeeds and converges to the globally optimal policy. Convergence occurs because a single policy can be simultaneously optimal over the set $\Theta$, as for every observation, there is one optimal action. The MMR policy is therefore also unique.

\subsubsection{When Minimax Regret Fails}
We next examine a UPOMDP where a single policy can no longer be simultaneously optimal over all levels. The setup is the same as the previous experiment, except that $\hh_2 = \hh_1$, $\hh_4 = \hh_3$, etc., meaning that there is some irreducible regret.
\cref{fig:exp:toy:ued_fails:both:regret} shows that regret-based UED rapidly obtains minimax regret, but fails to obtain optimal regret on the non-regret-maximising levels. By contrast, \remidi obtains optimal regret on all levels. 
It does this by first obtaining global minimax regret, at which point it restricts its search over levels to those that are distinguishable from minimax regret levels. Since the agent's policy is not updated on these prior states, it does not lose MMR guarantees.
\begin{figure}[H]
    \centering
    \includegraphics[width=0.4\textwidth]{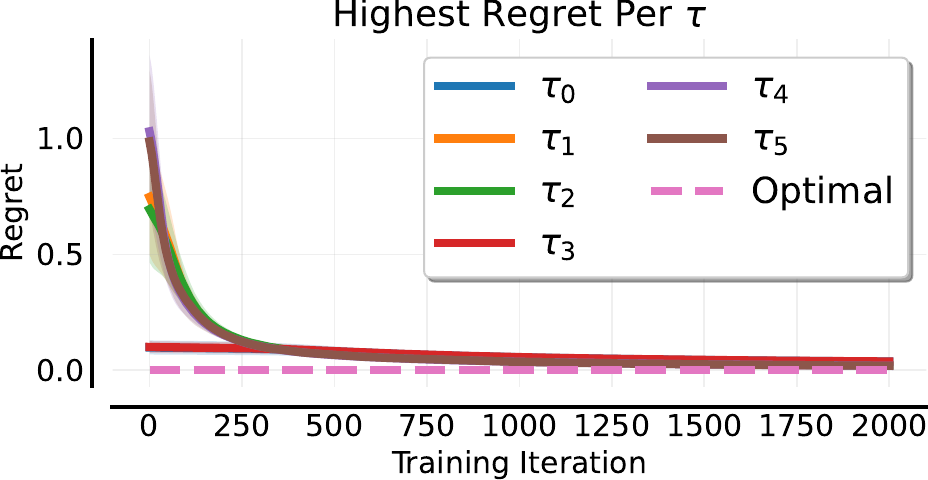}
    \caption{Plotting the regret of MMR UED throughout training for each of the $6$ trajectories. Here the optimal regret is $0$ for each trajectory, and minimax regret achieves this.}
    \label{fig:exp:toy:ued_works:standard_ued}
\end{figure}

We analyse this further in \cref{fig:exp:toy:ued_fails:both:prob} by plotting the probability of each level being sampled over time. Regret-based UED rapidly converges to sampling only the highest-regret levels ($\theta_4$ and $\theta_5$), and shifts the probability of sampling the other levels to zero. By contrast, our multi-step process first samples these high-regret levels exclusively. Thereafter, these are removed from the adversary's options and it places support on all other levels.
This shows that, while we could improve the performance of regret-based UED by adding stronger entropy regularisation~\citep{mediratta2023Stabilizing}, or making the adversary learn slower, the core limitation remains: when regret does not correspond to learnability, minimax regret UED will sample inefficiently.

\begin{figure*}[h]
    \begin{minipage}{0.48\linewidth}
        \centering
        \includegraphics[width=1\linewidth]{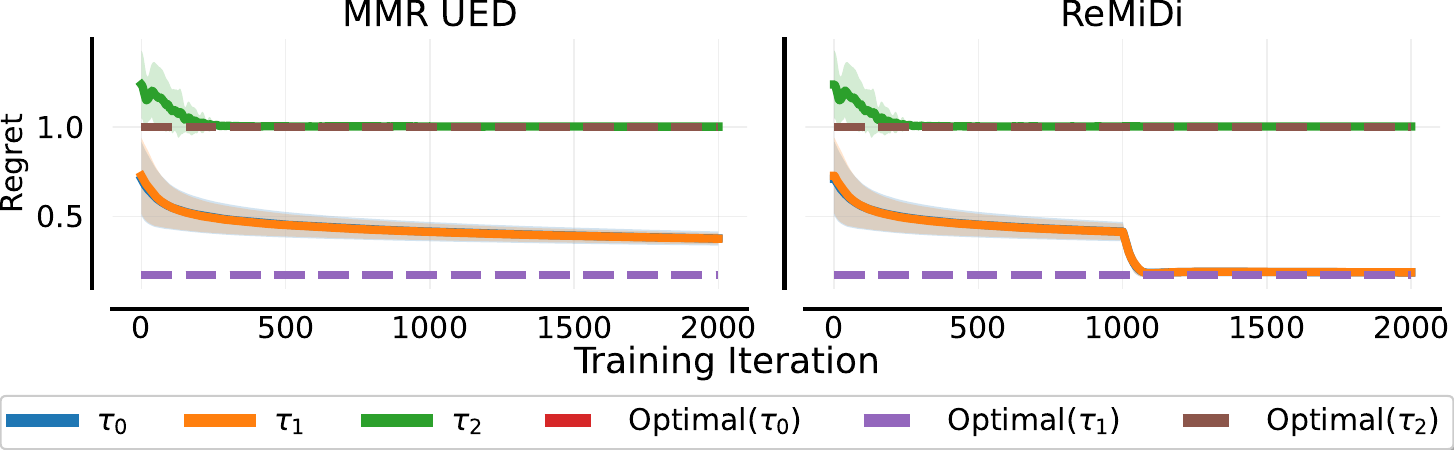}
        \caption{The greatest regret on each trajectory $\hh$ for (left) Standard Minimax Regret UED; and (right) ReMiDi. ReMiDi obtains optimal regret on all levels, whereas MMR does not.}
        \label{fig:exp:toy:ued_fails:both:regret}
    \end{minipage}\hfill%
    \begin{minipage}{0.48\linewidth}
        \centering
        \includegraphics[width=1\linewidth]{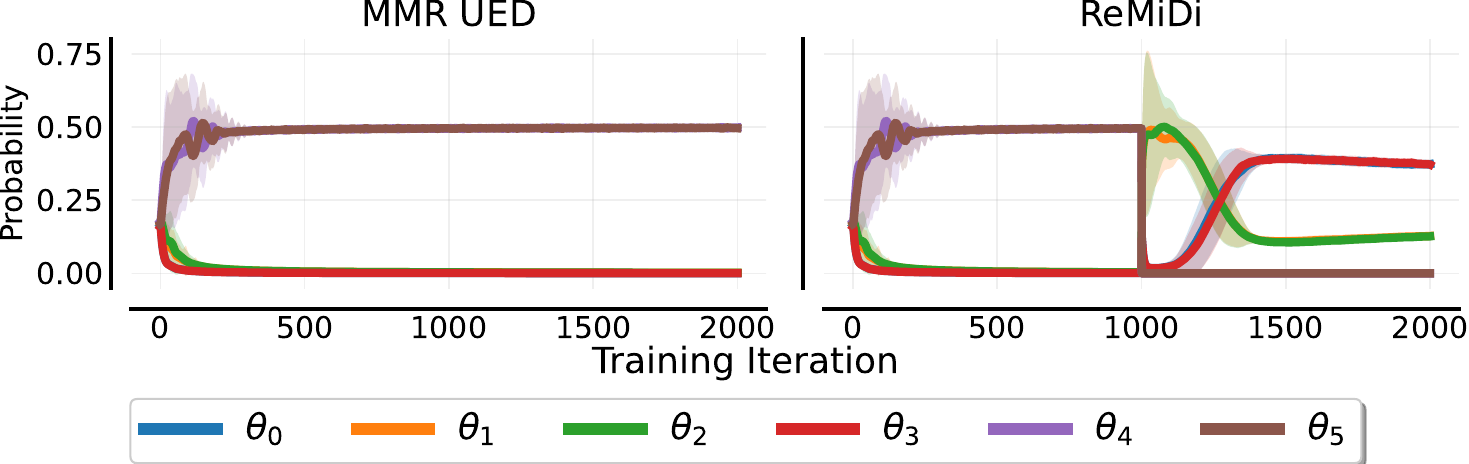}
        \caption{The Adversary's probability of sampling each environment $\theta$ for (left) Standard Minimax Regret UED; and (b) ReMiDi. MMR UED exclusively samples the irreducible-regret levels.}
        \label{fig:exp:toy:ued_fails:both:prob}
    \end{minipage}
\end{figure*}

\subsection{Maze}\label{sec:exp_maze}
We next consider Minigrid, a common benchmark in UED~\citep{dennis2020Emergent,jiang2021Replayguided,holder2022Evolving}. We discuss two distinct experimental settings.
The first is an implementation of the T-maze example in \cref{sec:intro}. Here the adversary can sample T-mazes or normal mazes. The reward of T-mazes is $+1$ or $-1$ depending on whether the agent reaches the goal or not, and the standard maze reward is the same as is used in prior work~\citep{dennis2020Emergent,jiang2021Replayguided}.
The second experiment is where the adversary has the choice of blindfolding the agent; in other words, it can zero out the agent's observation.
In both cases, we evaluate on a standard set of held-out mazes.

\cref{fig:exp:maze:tmaze} shows that $\text{PLR}^\perp$ with perfect regret as its score function results in poor performance on actual mazes. The reason for this is that it trains almost exclusively on T-mazes, and not on actual mazes (see \cref{fig:exp:maze:tmaze:prob}). ReMiDi, by contrast, samples T-mazes initially, and thereafter does not, as they have identical observations with previous MMR levels. This results in improved performance on actual mazes.

The blindfold experiment (\cref{fig:exp:maze:blind}) shows a similar result. ReMiDi performs better than $\text{PLR}^\perp$; again, this is because $\text{PLR}^\perp$ trains the agent almost exclusively on blindfold levels (\cref{fig:exp:maze:blind:prob}), as these have high irreducible regret. Interestingly, $\text{PLR}^\perp$, despite training almost exclusively on blind levels, still manages to solve many test mazes, a phenomenon investigated by \citet{wijmans2023Emergence}.

\begin{figure}[h]
    \begin{subfigure}{0.48\linewidth}
        \centering
        \includegraphics[width=1\linewidth]{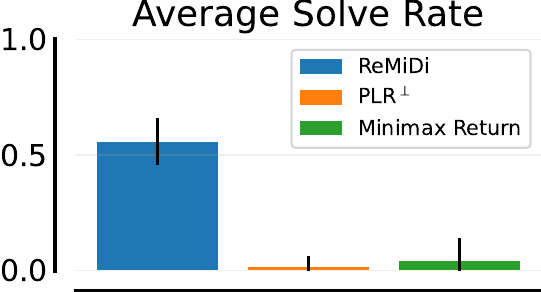}
        \caption{}
        \label{fig:exp:maze:tmaze}
    \end{subfigure}\hfill%
    \begin{subfigure}{0.48\linewidth}
        \centering
        \includegraphics[width=1\linewidth]{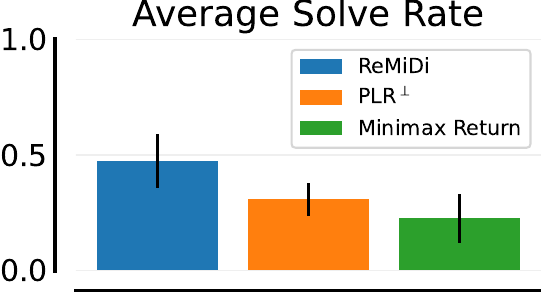}
        \caption{}
        \label{fig:exp:maze:blind}
    \end{subfigure}
    \caption{Solve rate on a set of held-out test mazes for the (a) T-maze and (b) blindfold experiments. ReMiDi is more effective at solving mazes than either $\text{PLR}^\perp$ or Minimax return.}
    \label{fig:exp:maze:tmaze_blind}
\end{figure}

\begin{figure}[h]
    \begin{subfigure}{0.48\linewidth}
        \centering
        \includegraphics[width=1\linewidth]{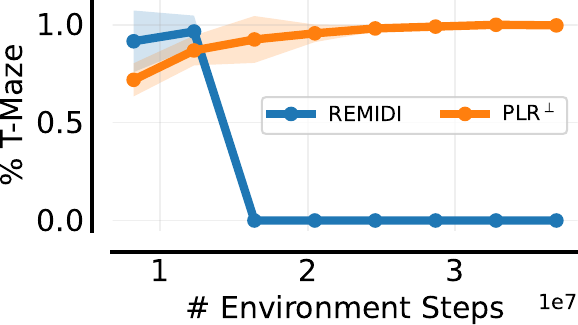}
        \caption{}
        \label{fig:exp:maze:tmaze:prob}
    \end{subfigure}\hfill%
    \begin{subfigure}{0.48\linewidth}
        \centering
        \includegraphics[width=1\linewidth]{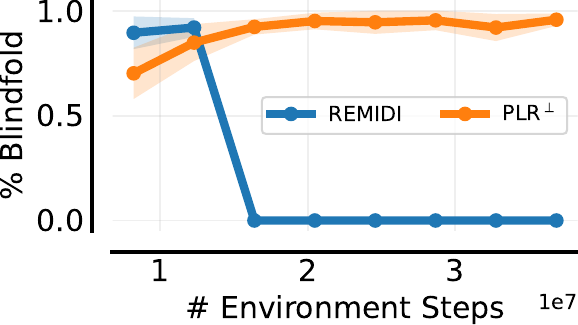}
        \caption{}
        \label{fig:exp:maze:blind:prob}
    \end{subfigure}
    \caption{Fraction of replay levels that are (a) T-mazes or (b) blindfolded. $\text{PLR}^\perp$ converges to almost never sampling normal mazes.}
    \label{fig:exp:maze:tmaze_blind:prob}
\end{figure}

\subsection{Lever Game}\label{sec:lever}
\begin{figure}[h]
    \begin{subfigure}{0.48\linewidth}
        \centering
        \includegraphics[width=1\linewidth]{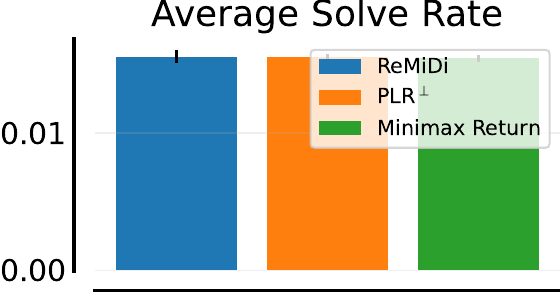}
        \caption{}
        \label{fig:exp:maze:lever:invis}
    \end{subfigure}\hfill%
    \begin{subfigure}{0.48\linewidth}
        \centering
        \includegraphics[width=1\linewidth]{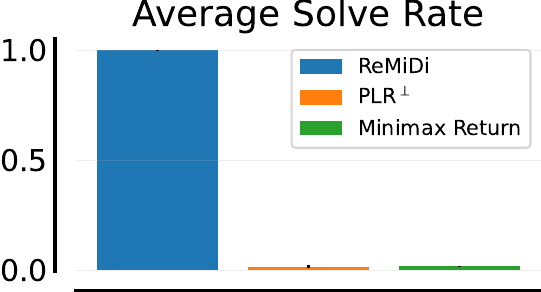}
        \caption{}
        \label{fig:exp:maze:lever:vis}
    \end{subfigure}
    \caption{Solve rate in the lever game for two subsets of evaluation levels: (a) invisible; and (b) visible answers. ReMiDi can solve both types, whereas  $\text{PLR}^\perp$ cannot, as it trained almost exclusively on invisible-answer levels.}
    \label{fig:exp:maze:lever:both}
\end{figure}
In this environment, inspired by \citet{hu2020Otherplay}, there are $64$ levers to pull, one of which is correct (reward of $+1$), and pulling a wrong lever results in a reward of $-1$. The adversary can make the correct lever known or unknown to the agent. In the latter case, the reward is multiplied by 10 (to simulate a harder problem having a higher reward).
Our analysis in \cref{sec:limits_of_regret} suggests that regret-based UED should solely sample levels where the correct answer is unknown, and the best option for the agent is to guess randomly (as this induces irreducible regret). Training solely on these levels, however, would cause the agent to perform poorly when it observes the correct answer.
Indeed, \cref{fig:exp:maze:lever:both} shows that on levels where the correct lever is not given---and the best policy guesses randomly---$\text{PLR}^\perp$ performs the same as ReMiDi, with a solve rate of around $\frac{1}{64} \approx 0.015$ (since there are 64 levers). 
On levels where the correct answer is given, however, ReMiDi performs perfectly (as it is possible to obtain 100\% accuracy), but $\text{PLR}^\perp$ fails as it almost never trained on these types of levels. 
Importantly, this result shows that both $\text{PLR}^\perp$ and ReMiDi satisfy minimax regret, but $\text{PLR}^\perp$ results in a policy that is effectively random, whereas ReMiDi learns a much more useful policy.

\subsection{Robotics}\label{sec:brax}
Our final experimental domain is robotics, using Brax~\citep{todorov2012Mujoco,brax2021Github} and a horizon of 100 timesteps. 
There are two types of levels that the adversary can choose, type A or type B. In type A levels, the task is standard: to walk forward. The levels represent disturbances to the agent's actions; the agent's action $a\in \mathbb{R}^n$ is multiplied by the level $\theta \in \mathbb{R}^n$ before being applied to the simulation.
Type B levels require the agent to perform a specific, but arbitrary, action that it cannot observe. The agent receives a large positive or negative reward, depending on whether the action is correct or not, and the episode terminates. After training, we evaluate performance when no disturbances are applied.
For these experiments, we use an episode length of 100, and consider four separate morphologies, Ant, Half-Cheetah, Hopper and Walker.

We present the results in \cref{tab:brax_results}. MMR and minimax return both fall victim to irreducible regret, training primarily on type B levels, causing the agent to fail when presented with type A ones. \remidi, by contrast, does not suffer from this problem, and is able to learn an effective policy.

\begin{table}[H]
    \caption{Brax results; mean $\pm$ standard deviation over 10 seeds.}
    \label{tab:brax_results}
    \resizebox{1\linewidth}{!}{\begin{tabular}{lllll}
\toprule
Method & Ant & Cheetah & Hopper & Walker \\
\midrule
$\text{PLR}^\perp$ & -50 $\pm$ 2 & -19 $\pm$ 14 & 67 $\pm$ 19 & 126 $\pm$ 68 \\
Minimax Return & -49 $\pm$ 4 & -16 $\pm$ 3 & 86 $\pm$ 15 & 148 $\pm$ 67 \\
ReMiDi & \textbf{257} $\pm$ 35 & \textbf{321} $\pm$ 16 & \textbf{193} $\pm$ 2 & \textbf{344} $\pm$ 21 \\
\bottomrule
\end{tabular}
}
\end{table}




\section{Related Work}\label{sec:related_work}

\textbf{UED and Adaptive Curricula.}
Many recent works aim to find an adaptive curriculum for an RL agent to train on, with different methods using different metrics to choose training levels.
The learning potential~\citep{oudeyer2007Intrinsic} of an agent on a particular level or training example is a measure of how much an agent's loss or reward will improve after training on this data point. Therefore, a level with a high learning potential is a promising one to train on, as the agent can still improve on it, but the level is not too difficult for the agent's current capability. While methods based on learning potential have shown promise empirically~\citep{florensa2017Automatic,portelas2019Teacher,matiisen2017Teacherstudent}, techniques that aim for robustness adversarially sample levels; for instance, adversarial minimax trains the agent in levels that minimise the agent's performance~\citep{pinto2017Robust,wang2019POET}. 
However, these worst-case levels may be impossible, and therefore provide no learning benefit~\citep{dennis2020Emergent}.
Minimax regret provides stronger robustness guarantees~\citep{dennis2020Emergent,jiang2021Replayguided,holder2022Evolving} and also alleviates the problem of minimax choosing impossible levels. 
Implicit in these methods is that regret corresponds to some notion of learnability~\citep{dennis2020Emergent}.
However, we have shown that regret does not always coincide with learnability, and that minimax regret may therefore cause learning to stagnate, a problem ReMiDi addresses. 
Some works in UED also discuss problems with minimax regret; for instance, \citet{gur2022Environment} describe situations in which a minimax regret adversary is incentivised to generate levels that are too difficult for the agent, and would provide no learning. 
\citet{garcin2024Dred} also show that minimax regret may not be a suitable objective if there is a particular distribution of interest we desire the agent to perform well over. They address this problem by using a small sample from this distribution to inform the level-generation process.

\textbf{Decision Theory.}
From another perspective, regret-based UED implements a minimax regret decision rule---i.e., minimising the worst-case regret~\citep{savage1951Theory,luce1957Games,peterson2017Introduction,dennis2020Emergent}. 
This decision rule provides an ordering over policies, preferring policies with lower worst-case regret. However, any two policies that have the same worst-case regret are treated as equally good by this decision rule. 
Our BLP objective imposes further ordering, breaking ties according to worst-case regret over all other levels that are distinguishable from MMR levels.

\textbf{Game Theory and Equilibrium Refinements.}
Our work is also related to the rich literature of Nash equilibrium refinements in game theory~\citep{selten1975Spieltheoretische,selten1975Reexamination,kreps1982Sequential,fudenberg1991Perfect,osborne2004Introduction,bonanno2015Game}.
In particular, our BLP solution concept is similar to the notion of Perfect Bayesian (or sequential) equilibria~\citep{selten1975Reexamination,kreps1982Sequential,fudenberg1991Perfect}, where a strategy must be optimal at all nodes of the game tree (given some belief), regardless of whether these nodes are reached under equilibrium or not.

\textbf{SAMPLR.} \citet{jiang2022Grounding} also focus on the limitations of standard minimax regret. They identify the problem of an automatic curriculum shifting the distribution of environment parameters away from the ground truth, which is problematic in partially-observable settings. 
The authors propose a solution to this problem that involves training on fictitious transitions based on the ground-truth distribution of unseen parameters. 
The regret stagnation problem we address---where degenerate environments with irreducible regret are continually trained on---is also a side-effect of using minimax regret in partially observable settings.
Importantly, one of SAMPLR's core assumptions is that the adversary and ground truth share support; however, as we have shown, this is not a given, as the adversary can place all of its probability mass on irreducible regret levels.

\textbf{Worst-Case and Robust RL.}
There are also other formalisms outside of the UED field, such as robust MDPs and robust control in general~\citep{iyengar2005Robust,el2005Robust,xu2010Distributionally,wiesemann2013Robust,lim2013Reinforcement,shen2020Deep,nakao2019Distributionally,yang2022Rorl,bauerle2020Distributionally,wang2023Foundation,bhardwaj2023Adversarial,ye2023Corruptionrobust,goyal2018Robust}. This field generally has a different overall objective to UED. In particular, the Robust MDP literature often focuses on the case where the model of an MDP has some statistical or estimation errors, and the task is to learn a robust policy with respect to the real dynamics. 
UPOMDPs, on the other hand, are rather aimed at modelling transfer as a decision under ignorance problem~\citep{milnor1951Games,peterson2017Introduction,dennis2020Emergent}, where there is extreme uncertainty about the environment, and the goal is to obtain an agent that obtains minimax regret over this set of environments. 
Furthermore, Robust RL methods tend to focus on worst-case objectives~\citep{iyengar2005Robust,el2005Robust} which---as our results and prior work~\cite{dennis2020Emergent} have shown---often prioritises unsolvable levels.

\section{Limitations \& Future Work}

One limitation of our work---indeed, of the UED field in general---is that all of our theoretical results hold only at convergence to Nash equilibrium. While a Nash equilibrium always exists in finite games~\citep{nash1950Equilibrium}, there is no guarantee that we will converge to it. Despite this, UED methods are still empirically useful~\citep{dennis2020Emergent,holder2022Evolving,samvelyan2023Maestro,team2023Humantimescale}.
Using more sophisticated algorithms for the agent and adversary---such as magnetic mirror descent~\citep{sokota2022Unified}---could be a way to alleviate this problem.

Furthermore, since we build on prior UED work, we use the same set of assumptions, that is, that our UPOMDPs are finite and discrete. In a continuous setting, we would require some other condition that ensures equilibria exist (for instance, closed convex continuous strategy spaces). 
Despite this, our algorithm can directly be applied to continuous settings, as our robotics experiments have shown.

We also note that \remidi requires finding the Nash equilibrium of multiple MMR games, which is theoretically more computationally challenging than just finding the equilibrium of one game. However, since existing UED methods merely run for a fixed number of environment timesteps, this problem falls away in practice. Furthermore, to obtain a BLP policy, \remidi potentially has to perform a large number of outer loop iterations (at most $|\Theta|$). However, in practice, we similarly approximate this by running for a small (e.g. 4) number of outer-loop iterations.

We intentionally designed our experiments to exhibit irreducible regret, but we note that in many prior benchmarks, learning stagnation due to irreducible regret levels has not been a major problem. One reason for this is if a policy can do well on all levels, then irreducible regret does not occur. Another reason is that irreducible regret is less of a problem when using imperfect measures of regret, as current methods do~\citep{dennis2020Emergent,jiang2021Replayguided}. 
Nevertheless, as our regret measures improve, and as UED is applied to more complex environments with more room for irreducible regret, our analysis will become increasingly relevant. We would further like to extend our work to the more general case where the score of a level does not correspond to its \textit{learnability}. Moreover, we believe that the connection (and difference) between regret and learnability merits further investigation. This would improve understanding of, and potentially disentangle the separate robustness~\citep{iyengar2005Robust,pinto2017Robust} and curriculum~\citep{matiisen2017Teacherstudent,florensa2017Automatic} aspects of UED.

A final limitation is that our algorithm requires knowing the likelihood of a trajectory under a set of environments a priori, but this may be impractical in practice. Learning a belief model~\citep{hu2021OffBelief,jiang2022Grounding} could alleviate this issue and make our algorithm more practical. Relatedly, we would like to investigate more practical ways of not degrading performance on previous adversaries, for instance, periodically training on these prior buffers.

There are several promising avenues for future work, such as attempting to refine our solution concept further, to move towards true leximinimax regret~\citep{peterson2017Introduction}. Ties among minimax regret policies would then be broken by considering the second-worst case environments, and so on (instead of considering only the second-worst case levels that are \textit{distinguishable} from MMR levels). 
Another avenue that would merit further investigation is using constrained optimisation approaches (e.g. \citet{ghosh2022provably}), where we train the policy on new levels with the constraint that its performance on past levels cannot degrade.

\section{Conclusion}
In this work, we show that the minimax regret decision rule has a significant limitation in partially-observable settings with high irreducible regret: the minimax regret policy may perform poorly on a large subset of levels. We propose the Bayesian level-perfect objective, a refinement of minimax regret that iteratively solves smaller games, obtaining minimax regret-like performance over more environments than is guaranteed by regret-based UED, alleviating this problem of MMR. We also develop a proof-of-concept algorithm, ReMiDi, that results in a BLP policy at convergence.

We provide theoretical justification for our approach, and show that it retains minimax regret guarantees while monotonically improving worst-case regret on non-highest-regret levels.
We perform experiments in several settings with high irreducible regret and demonstrate that regret-based UED methods suffer from the regret stagnation problem we describe, causing the resulting policies to perform poorly on a large number of levels.
ReMiDi, by contrast, is less susceptible to environment distributions that contain degenerate high-irreducible regret environments.

Ultimately, we believe that UED has great potential to develop robust, generalisable and effective RL agents. 
Our BLP solution concept is an improved objective that obtains stronger robustness guarantees than minimax regret, and avoids its pathologies in irreducible-regret environments---paving the way for UED to be applied to more open-ended and complex task spaces.
\section*{Acknowledgements}
MB is supported by the Rhodes Trust. SC is funded by the EPSRC Centre for
Doctoral Training in Autonomous Intelligent Machines and Systems.
This work was supported by UK Research and Innovation and the European Research Council, selected by the ERC, and funded by the UKRI [grant number EP/Y028481/1].
\section*{Impact Statement}
This paper presents work whose goal is to advance the field of Machine Learning. There are many potential societal consequences of our work, none of which we feel must be specifically highlighted here.


\bibliography{bib.bib}
\bibliographystyle{icml2024}

\newpage
\appendix
\onecolumn
\section{Proofs of Theoretical Results}\label{app:theoretical_proofs}
\subsection{Proof of the Minimax Regret Refinement Theorem (\cref{thrm:minimax_refinement_theorem})}\label{sec:catchall}

\begin{customthm}{\ref{thrm:minimax_refinement_theorem}}
\mmrrefinementtheorem{}
\end{customthm}
\begin{proof}
We first prove (a) inductively. For each $k$, we assume $\pi_{k}$ is minimax regret and must show that $\pi_{k+1}$ is also minimax regret.

Base Case: 
$\pi_1$ is trivially minimax regret by definition.

Inductive Case: Suppose $\pi_k$ is minimax regret.
By \cref{thm:mwci}, $\pi_{k+1}$ must have better worst-case regret than $\pi_k$ for the set of levels outside the support of all previous adversaries. Given this, and since $\pi_{k+1}$ is also constrained to behave identically to $\pi_k$ under the support of these previous adversaries, it cannot perform worse over any previously-sampled level. Thus, $\pi_{k+1}$ cannot decrease worst-case performance compared to $\pi_k$ on the full level set $\Theta$. Hence $\pi_{k+1}$ is minimax regret optimal.

We prove (b) inductively again. 

Base Case: The case of $i = 2$ follows directly from \cref{thm:mwci}.

Inductive Case:
Again, we can invoke \cref{thm:mwci} to show that $\pi_i$ must monotonically improve worst-case regret of $\pi_{i-1}$ over $\Theta \setminus \Theta'_i$.

(c) This follows directly from the definition of the refined game, as $\pi_i$ cannot change behaviour for any level sampled by any previous adversary.

\end{proof}

\subsection{Proof of Bayesian Perfect Policy (\cref{thrm:bayes_perf})}\label{sec:proof:bayes_perf}
\begin{customthm}{\ref{thrm:bayes_perf}}
    \bayesperfecttheorem
\end{customthm}
\begin{proof}
    We denote $\Theta_\hh$ as the information set of $\hh$, consisting of all levels $\theta$ that could have generated $\hh$.

    Let $\pi'$ be a Bayesian level-perfect MMR policy (as per \cref{def:level_perfect}).
    To show that $\pi'$ is a Perfect Bayesian policy, we must show (a) that at every trajectory, corresponding to an information set, $\pi'$ acts optimally with respect to some distribution $\mu(\hh)$ over $\Theta_\hh$. Furthermore, (b) this belief must be updated using Bayes' rule wherever possible, i.e., we do not update the posterior on a 0 probability event.

\textbf{(i): Equlibrium Paths}
    We first consider on-equilibrium paths, i.e., those trajectories reachable by $\pi_1$ and $\Lambda_1$.

    Define
    $$
    W = \min_{\pi \in \Pi}\left\{ \max_{\theta \in \Theta} \{ \perfregret_\theta(\pi) \}\right\}
    $$
    to be the optimal worst-case regret.

    Since $\pi'$ and $\Lambda_1$ are in equilibrium of the minimax regret game, we have that, for each $\theta \in \text{Supp}(P_{\Lambda_1}(\theta))$,  $\perfregret_\theta(\pi') = W$. Therefore, for any probability distribution $P'(\theta)$ such that $\text{Supp}(P'(\theta)) \subseteq \text{Supp}(P_{\Lambda_1}(\theta))$, we have that

    $$
    \mathbb{E}_{\theta \sim P'(\theta)} [\perfregret_\theta(\pi')] = W.
    $$

    Now, let $\hh$ be any trajectory along the equilibrium path. Then we know that $P(\theta|\hh)$ is a probability distribution that has support that is a subset of the support of $\Lambda_1$.

    Assume, for the sake of contradiction, that there exists a policy $\bar\pi$ that obtains a lower expected regret over $P(\theta|\hh)$. 
    If this is true, it must mean that $\bar\pi$ improves upon $\pi'$ in some environments in the support of $P(\theta|\hh)$.
    Since $\text{Supp}(P(\theta | \hh)) \subseteq \text{Supp}(P_{\Lambda_1}(\theta))$, $\bar\pi$ must improve in performance compared to $\pi'$ over the entire distribution $P_{\Lambda_1}(\theta)$. This provides the contradiction, since $\pi'$ and $\Lambda_1$ are at Nash equilibrium---therefore, there does not exist a policy $\bar\pi$ that obtains a better utility while the adversary is fixed.

    \textbf{(ii): Off-Equilibrium Paths}
    Let $\hh$ be a realisable trajectory that is not reached under $\Lambda_1$ and $\pi_1$. Let $i$ be the smallest integer such that $\hh$ is reached under $\Lambda_i$ and $\pi'_i$. Then, using $\Lambda_i$ as the prior and performing Bayesian updating using $\pi'_i$ will result in a well-defined probability distribution over $\Theta_\hh$. 
    If there exists a policy that performs better starting at $\hh$ than $\pi_i'$ over this distribution, then---similarly to case (i)---that contradicts $\pi_i'$ being a Nash equilibrium of the $i$-th refined game. Therefore, by contradiction, $\pi'$ satisfies (a) on off-equilibrium paths.

    \textbf{(iii): Non-Realisable Paths}
    Consider a trajectory that does not occur under any $\Lambda_i$, $\pi'_i$. This must mean that the trajectory can happen in some environment $\theta$, but $\pi'_i$ never acts such that this trajectory occurs. 
    Let $\hh$ denote one of these non-realisable trajectories.
    If $\pi'$ is strictly dominated over the information set $\Theta_\hh$, let $\bar\pi$ be any policy that weakly dominates $\pi'$ and is itself not strictly dominated. This must exist, as some strategies must survive iterated deletion of strictly-dominated strategies~\citep[Proposition 61.2]{osborne1994Course}.
    
    We can modify $\pi'$'s actions in all descendants $\hh'$ of $\hh$ to act according to $\bar\pi$. 
    We call this modified policy $\pi_\text{bayes}$. This policy acts according to $\pi'$ in all realisable trajectories, and therefore obtains the same regret in all environments.
    
    If $\pi'$ is not strictly dominated, we can let $\pi_\text{bayes}$ act according to $\pi'$ on all descendants of $\hh$.

    Then, in either case, we have that $\pi_\text{bayes}$ is not strictly dominated over $\Theta_\hh$, and therefore must be optimal over some distribution $\mu(\Theta_\hh)$ (See Lemma 60.1 of \citet{osborne1994Course}).

    By (i) and (ii), we have that $\pi_\text{bayes}$ satisfies optimality over some belief (which is updated using Bayes' rule) on all realisable paths. By construction, we must have that $\pi_\text{bayes}$ also satisfies this on all non-realisable paths. Therefore, $\pi_\text{bayes}$ is a Perfect Bayesian policy.
    However, since $\pi'$ acts consistently with $\pi_\text{bayes}$ over \textit{all} realisable trajectories (by construction), $\pi'$ acts consistently with some Perfect Bayesian policy on these realisable trajectories.
    This proves the result.

\end{proof}
\section{On the Temporal Inconsistency of Minimax Regret}\label{app:temporal_inconsistency}
In this section we expand upon some simpler, but ultimately flawed alternatives to our BLP formulation.

\textbf{Global Regret} One possible approach is to consider each trajectory $\tau$, and the associated set of levels consistent with it $\Theta_\tau$. We could aim to find the policy that satisfies MMR over all of these subsets. In other words, a policy in the intersection of $\Pi^\star_\text{MMR}$ for all $\Theta_\tau$:
\begin{align}\label{eq:intersect:all}
    \pi \in \Pi_{\mathcal{T}} \dot = \bigcap_{\tau \in \mathcal{T}} \argmin_{\pi \in \Pi}\left\{ \max_{\theta \in \Theta_{\tau}} \{ \perfregret_\theta(\pi) \}\right\}.
\end{align}

However, a simple counterexample suffices to show that this set is not guaranteed to be non-empty in general. Consider a two-step MDP that has a single initial observation but multiple levels $\theta_1$ and $\theta_2$, each having a unique second observation $\tau_1$ or $\tau_2$, respectively. Finally, suppose that $\theta_1$ and $\theta_2$ have different optimal actions in the initial state. Then, $\Theta_{\tau_1} = \{ \theta_1 \}$ and $\Theta_{\tau_2} = \{\theta_2\}$, and the minimax regret policy over $\Theta_{\tau_1}$ and $\Theta_{\tau_2}$ must perform different actions given the shared initial observation. Therefore, the set of minimax regret policies over $\Theta_{\tau_1}$ and $\Theta_{\tau_2}$ must be disjoint. This means that the policy in \cref{eq:intersect:all} is not always guaranteed to exist.

\textbf{Local Regret} Next, if we use a local form of regret, where regret at a trajectory $\tau$ is defined as the performance difference between the optimal agent and the current agent, given that both are initialised to $\tau$.

We propose the following environment as a counterexample to this: A two-step MDP with two levels $\theta_A$ and $\theta_B$. These share an initial state $s_0$, and there are two possible next states, $s_A$ and $s_B$. There is only one allowed action in $s_0$, which stochastically transitions to $s_A$ (99\% probability if the level is $\theta_A$, 1\% probability if the level is $\theta_B$) or $s_B$ (99\% probability in $\theta_B$ and 1\% in $\theta_A$).

Once the agent is in either $s_A$ or $s_B$, it must bet \$100 on whether it is in $\theta_A$ (action $a_1$) or $\theta_B$ (action $a_2$). If it wins the bet it gains \$100, otherwise it loses the \$100.
Any policy can therefore be described by two actions, $i$, $j$, corresponding to the policy's actions in $s_A$ and $s_B$ respectively. \cref{table:lottery:utility} contains the utility for each policy, and \cref{table:lottery:regret} shows the regret for each policy. We note here we consider only deterministic policies, and stochastic policies can be obtained by using a mixed strategy.
 
Now let us consider the minimax regret policy for the entire MDP. 
It is clear that the policy must perform action $a_1$ in $\theta_A$ and $a_2$ in $\theta_B$, with an overall regret of $2$. Therefore, the policy $a_1 a_2$ is MMR.

\begin{table}[H]
    \begin{minipage}{0.5\linewidth}
        \caption{The decision matrix for this MDP.}\label{table:lottery:utility}
        \centering
        \begin{tabular}{ccccc}
            \toprule
            {Utility} & $a_1 a_1$ & $a_1 a_2$ & $a_2 a_1$ & $a_2 a_2$\\
            \midrule
            $\theta_A$ & $+100$ & $+98$ & $-98$ & $-100$   \\
            $\theta_B$ & $-100$  & $+98$ & $-98$  & $+100$  \\
            \bottomrule
        \end{tabular}
    \end{minipage}
    \begin{minipage}{0.5\linewidth}
        \caption{Regret Matrix corresponding to \cref{table:lottery:utility}.}\label{table:lottery:regret}
        \centering
        \begin{tabular}{ccccc}
             \toprule
            {Regret} & $a_1 a_1$ & $a_1 a_2$ & $a_2 a_1$ & $a_2 a_2$\\
            \midrule
            $\theta_A$ & $0$ & $2$ & $198$ & $200$   \\
            $\theta_B$ & $200$  & $2$ & $198$  & $0$  \\
            \bottomrule
        \end{tabular}
    \end{minipage}
\end{table}

Now let us consider the game starting from $s_A$, with utility and regret shown in \cref{table:lottery:utility:sub,table:lottery:regret:sub}. Here we note that the policy can be described solely by one action, that which it takes in $s_A$.

In this case, the game is similar to matching pennies~\citep{osborne1994Course,bonanno2015Game} and we have that the minimax regret policy must perform action $a_1$ and $a_2$, each with 50\% probability. 
By symmetry, the same holds for the game starting at $s_B$. The resulting policy that satisfies minimax regret starting from both $s_A$ and $s_B$ is to act according to $a_1 a_1$ 50\% of the time, and according to $a_2 a_2$ 50\% of the time. If we apply this policy in the original game, we can see that it obtains a worst-case regret of $100$---much higher than the global MMR policy's worst-case regret of $2$.
Therefore, \emph{being minimax regret on both $s_A$ and $s_B$ causes a policy to not be minimax regret over the entire game.}

\begin{table}[H]
    \begin{minipage}{0.5\linewidth}
        \caption{The decision matrix for the MDP starting at $s_A$.}\label{table:lottery:utility:sub}
        \centering
        \begin{tabular}{ccc}
            \toprule
            {Utility} & $a_1$ & $a_2$\\
            \midrule
            $\theta_A$ & $+100$ & $-100$  \\
            $\theta_B$ & $-100$  & $+100$ \\
            \bottomrule
        \end{tabular}
    \end{minipage}
    \begin{minipage}{0.5\linewidth}
        \caption{The regret for the MDP starting at $s_A$.}\label{table:lottery:regret:sub}
        \centering
        \begin{tabular}{ccc}
            \toprule
            {Utility} & $a_1$ & $a_2$\\
            \midrule
            $\theta_A$ & $0$ & $200$  \\
            $\theta_B$ & $200$  & $0$ \\
            \bottomrule
        \end{tabular}
    \end{minipage}
\end{table}

This counterexample shows that if we aim to perform minimax regret using a local regret measure at a particular trajectory $\tau$, then we can invalidate global minimax regret guarantees.

\textbf{Some Intuition} An intuitive explanation for why this happens is that the global policy can hedge its risk by committing 100\% to $\theta_A$ in state $s_A$ and 100\% to $\theta_B$ in $s_B$; in this way, it has minimax regret globally because the environment can transition to either $s_A$ or $s_B$.
The local policy, by contrast, assumes it is \emph{already} in $s_A$ or $s_B$, and therefore has to hedge its risk in each case.

\textbf{Summary} The side-effect of these counterexamples is that we cannot aim to be minimax regret given any trajectory using a local regret measure, because if we change behaviour on any future trajectory, it may cause the agent to lose minimax regret guarantees. This is why our BLP formulation \emph{only} alters behaviour of trajectories that never occur under the previous adversaries and policies. This allows us to circumvent this temporal inconsistency problem.

\section{Additional Background}\label{app:additional_bg}
In this section, we provide additional background. \cref{sec:dui} discusses decisions under ignorance---an important foundational body of work for UED. \cref{sec:practical_ued} discusses practical UED methods and \cref{app:regret_metrics} describes common approximations for regret.
\subsection{Decisions under ignorance}\label{sec:dui}
Decisions under ignorance~\citep{peterson2017Introduction} is the setting where a particular agent has to make a decision, i.e., performing an action, which will have a particular outcome, depending on the true state of the world. The possible world states, actions and outcomes are known, but the probability of the world being in any particular state is unknown.

Generally, this can be formalised by letting $S$ be all possible world states, $A$ being all possible actions that the agent can take, and $U(s, a)$ being the \textit{utility} of the agent performing action $a$ in state $s$.

This in turn can be written as a matrix, where each row corresponds to a particular state $s$ and each column corresponds to a particular action $a$. The entry in the matrix is the utility of performing action $a$ in state $s$. There are several different decision rules of which we briefly discuss three, \textit{minimax}, \textit{leximin} and \textit{minimax regret}.

Concretely, each rule $r$ induces an ordering over actions $\geq_r$. 
The \textbf{minimax} decision rule chooses actions based on maximising the worst case utility, i.e., $a_i \geq_{\text{minimax}} a_j$ if and only if $\min_s U(s, a_i) \geq \min_s U(s, a_j)$, i.e., that the worst case performance of $a_i$ is better than that of $a_j$.

However, if multiple actions have the same worst case utility, then the minimax decision rule is indifferent between these actions. \textbf{leximin} breaks ties by choosing, among those actions that have optimal worst-case utility, the one with the best second worst-case utility, and so on.
Concretely, if we let $\text{min}^n$ as the $n^\text{th}$ worst possible outcome. The leximin ordering is as follows:
\begin{align*}
    a_i \geq_\text{leximin} a_j \text{ iff } \exists n > 0, s.t., min^n(a_i) > min^n(a_j) \text { and } \\min^m(a_i) = min^m(a_j) \text{ for all } m < n.
\end{align*}

\textbf{Minimax regret.} Finally, the \textbf{minimax regret} decision rule chooses actions based on minimising the maximum regret, i.e., $a_i \geq_{\text{minimax regret}} a_j$ if and only if (letting $U^*(s) = \max_a \{ U(s, a) \}$):
\begin{align*}
\max_s \{ U^*(s) - U(s, a_i) \} \leq \max_s \{ U^*(s) - U(s, a_j) \}.
\end{align*}

For instance, if we consider \cref{table:ex_decision_matrix}, $a_1$ and $a_2$ would both be minimax, since their worst case is the best. $a_2$ would be leximin, since its second worst case is better than $a_1$'s second worst case. Both $a_3$ and $a_4$ would be minimax regret, since their worst-case regret is the lowest.

\begin{table}[h]
    \caption{An example decision matrix}\label{table:ex_decision_matrix}
    \centering
    \begin{tabular}{ccccc}
        \toprule
        {} & $a_1$ & $a_2$ & $a_3$  & $a_4$ \\
        \midrule
        $s_1$ & $-100$ & $-100$ & $-101$  & $-101$  \\
        $s_1$ & $-90$  & $-2$   & $-1$ & $1$        \\
        $s_2$ & $-90$  & $0$    & $-1$ & $-2$        \\
        \bottomrule
    \end{tabular}
\end{table}

\begin{table}[h]
    \caption{The regret matrix of \cref{table:ex_decision_matrix}}\label{table:ex_decision_matrix_regret}
    \centering
    \begin{tabular}{ccccc}
        \toprule
        {} & $a_1$ & $a_2$ & $a_3$  & $a_4$ \\
        \midrule
        $s_1$ & $0$   & $0$ & $1$  & $1$  \\
        $s_1$ & $91$  & $3$   & $2$ & $0$        \\
        $s_2$ & $90$  & $0$    & $1$ & $2$        \\
        \bottomrule
    \end{tabular}
\end{table}
\subsubsection{Two-Player zero-sum games}\label{sec:two_player_zero_sum_games}

Here we briefly connect the above setting to two-player zero-sum games. While we have explained the above in the context of a fixed, but unknown world state $s$, the world state could also be chosen by another agent, with its payoff being the negative of the first agent's payoff.

If these players are in a Nash equilibrium, it means that neither player can improve their payoff by unilaterally changing their strategy. If each strategy set is finite, this game has a Nash equilibrium~\citep{nash1950Equilibrium}, and the minimax theorem applies directly~\citep{osborne2004Introduction}, implying 
that the payoff is $\min_{a}\{\max_{s} \{ \text{Payoff(s, a)} \} \}$.

\subsection{Practical UED Methods}\label{sec:practical_ued}
In practice, there are two primary classes of UED methods, generation- and curation-based. 
\textbf{Generation-based methods}, such as PAIRED~\citep{dennis2020Emergent}, train a level-editor adversary using reinforcement learning, and it learns how to generate levels that maximise the agent's regret. As an example, in a maze setting, the adversary performs $N$ sequential actions, and each action places either the agent, the goal, or a wall. It is updated using RL with a reward of $-\text{Regret}_\theta(\pi)$.

Our method is agnostic to the exact UED algorithm used, but our concrete implementation builds on $\text{PLR}^\perp$ \citep{jiang2021Replayguided}.

The other broad class of methods are \textbf{curation-based}~\citep{jiang2021Replayguided,holder2022Evolving,samvelyan2023Maestro}. These methods generate levels using domain randomisation--effectively generating random levels by e.g. sampling maze tiles i.i.d. Then, the agent is evaluated on these levels and its approximate regret is measured. The levels that induce a high regret are then added to a buffer of levels, which the agent is later trained on. In this way, the UED algorithm only has to curate high-regret levels instead of generating them directly, which is quite challenging.

\subsection{Regret Approximations in UED}\label{app:regret_metrics}
While regret has desirable theoretical properties, it is intractable to compute in general. Therefore, UED methods have to instead approximate regret. PAIRED~\citep{dennis2020Emergent} does this by concurrently training two agents, and using the difference in performance between these as a proxy for regret. $\text{PLR}^\perp$~\citep{jiang2021Replayguided} uses two different scoring functions, Positive Value Loss (PVL) and Maximum Monte Carlo (MaxMC). PVL approximates regret as the average of the value loss (the reward obtained minus the predicted reward) over all transitions that have positive value loss in an episode. This prioritises levels on which the agent can still improve. MaxMC approximates the optimal return on any level as the best performance the agent has ever achieved on this level. 
\section{Experimental Details}\label{app:experimental_details}
\subsection{ReMiDi Implementation Details}\label{app:remidi_implementation_details}
We implement ReMiDi on top of $\text{PLR}^\perp$ as our base UED algorithm. 
The procedure is as follows. We first run $\text{PLR}^\perp$ normally for $N = 1000$ iterations. We then initialise a new adversary. 
When new levels are sampled, we perform the agent's action on every one of the levels in the previous buffer(s).
Using the observations we obtain, we determine when the current trajectory is distinguishable from a trajectory from a prior adversary. If the trajectory has complete overlap with any level in the previous buffer, we do not add it to the new one. If it has partial or no overlap, we add it to the new buffer as normal (i.e., if the level has a higher score than any existing one).
We continue to initialise a new adversary after every $N$ steps. For computational reasons, we only have a fixed number of adversaries, and remain at the last one after we have iterated through all of the previous ones.

The above works since each of our environments is deterministic. In stochastic settings, one could either add the random seed to the level $\theta$, in which case the environment becomes deterministic given $\theta$. Alternatively, we can learn a belief model $P(\hh|\Lambda_i)$ concurrently with learning $\Lambda_i$. Then, if $P(\hh|\Lambda_i)$ is lower than some threshold for all previous adversaries, we can consider it as distinguishable.

Finally, we approximate \cref{alg:refinement_alg} in that we do not train separate policies for each adversary. However, we update $\pi$ only on parts of trajectories that are distinguishable from previous adversaries. While we could periodically train the agent on levels from previous buffers to ensure it does not forget, we opt for the simpler option.
\subsection{Environment Description}
\subsubsection{Tabular Setting}
The tabular game consists of 6 environments $\theta_1, \dots, \theta_6$. In the first experiment, each of these consists of a unique trajectory $\hh_i$. In each state, the agent can choose between two actions, $A$ and $B$.
The environment terminates after one step, with rewards given by \cref{tab:app:tabular:one}.
\begin{table}[H]
    \centering
    \caption{The rewards of the first tabular setting in \cref{sec:exp_toy_settings}.}
    \label{tab:app:tabular:one}
    \begin{tabular}{ccll}
         \toprule
         Environment & Trajectory & Reward A & Reward B \\
         \midrule
        $\theta_1$ & $\hh_1$  & $-0.1$ & $+0.1$ \\
        $\theta_2$ & $\hh_2$  & $+0.7$ & $-0.7$ \\
        $\theta_3$ & $\hh_3$  & $-0.7$ & $+0.7$ \\
        $\theta_4$ & $\hh_4$  & $+0.1$ & $-0.1$ \\
        $\theta_5$ & $\hh_5$  & $-1.0$ & $+1.0$ \\
        $\theta_6$ & $\hh_6$  & $+1.0$ & $-1.0$ \\
         \bottomrule
    \end{tabular}
\end{table}

The second experiment used the same rewards, except that $\hh_1 = \hh_2$, $\hh_3 = \hh_4$ and $\hh_5 = \hh_6$.

In both cases, the adversary and agent were implemented using tabular actor-critic~\citep{sutton2018reinforcement}. The procedure is similar to PAIRED~\citep{dennis2020Emergent}, except that we used perfect regret (corresponding to an antagonist that is optimal on each level and does not learn).
Both the agent and adversary are effectively bandit agents, and the adversary was updated using regret as the reward, and the agent's reward was the negative of regret.
When updating the agent, its reward is computed by taking the expectation over the adversary's distribution given the action the agent takes. Likewise, when computing the regret for updating the agent, we compute the agent's expected return over the environment selected by the adversary.
At each iteration, we perform $5$ updates of the adversary and then $5$ updates of the agent.

We use $\gamma=0.95$, entropy coefficient of $0.1$, policy learning rate of $0.01$ and value function learning rate of $1.0$.

\subsubsection{T-maze \& Mazes}
Every time we generate a new random level in this experiment, there is a 50\% chance that it is a T-maze, and it is a normal maze otherwise.
The T-maze's goal is invisible to the agent, and each T-maze looks like the levels in \cref{fig:app:tmazes}. The boundaries are generated randomly. This has the effect that each T-maze in our buffer is unique, even though they play identically. The T-maze terminates once the agent moves with a reward of either $+1$ or $-1$, depending if the goal is reached or not.

\begin{figure}[H]
    \centering
    \begin{subfigure}{0.15\linewidth}
        \includegraphics[width=1\linewidth]{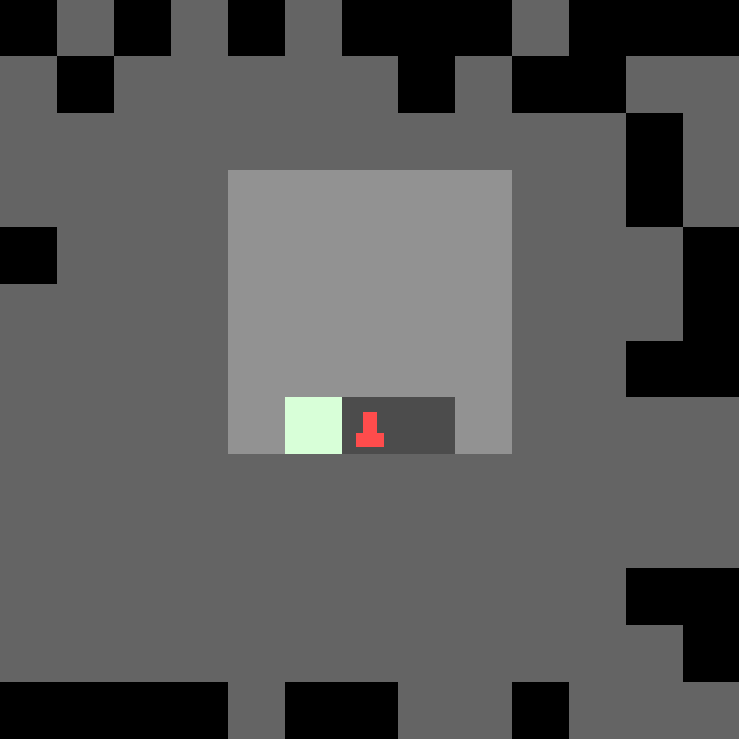}
    \end{subfigure}
    \begin{subfigure}{0.15\linewidth}
        \includegraphics[width=1\linewidth]{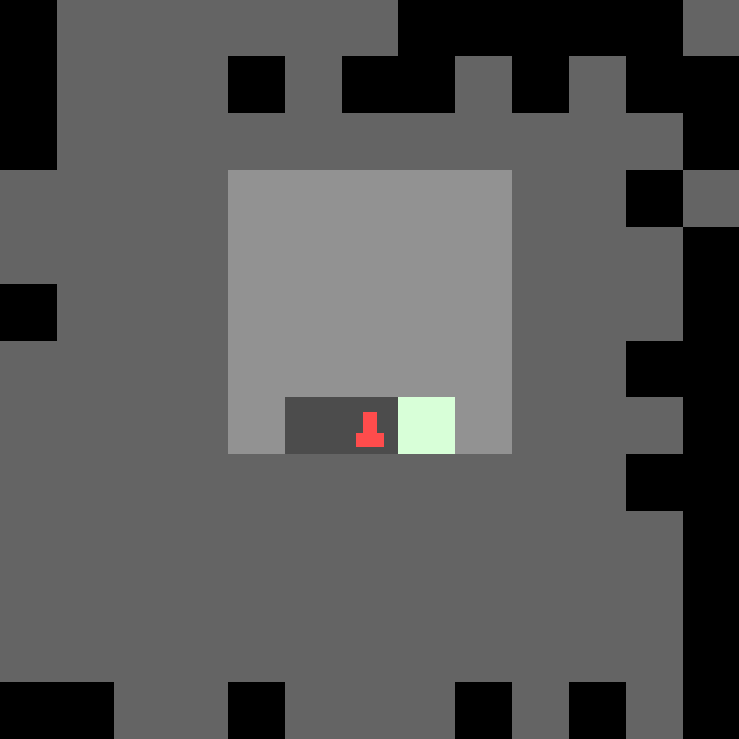}
    \end{subfigure}
    \begin{subfigure}{0.15\linewidth}
        \includegraphics[width=1\linewidth]{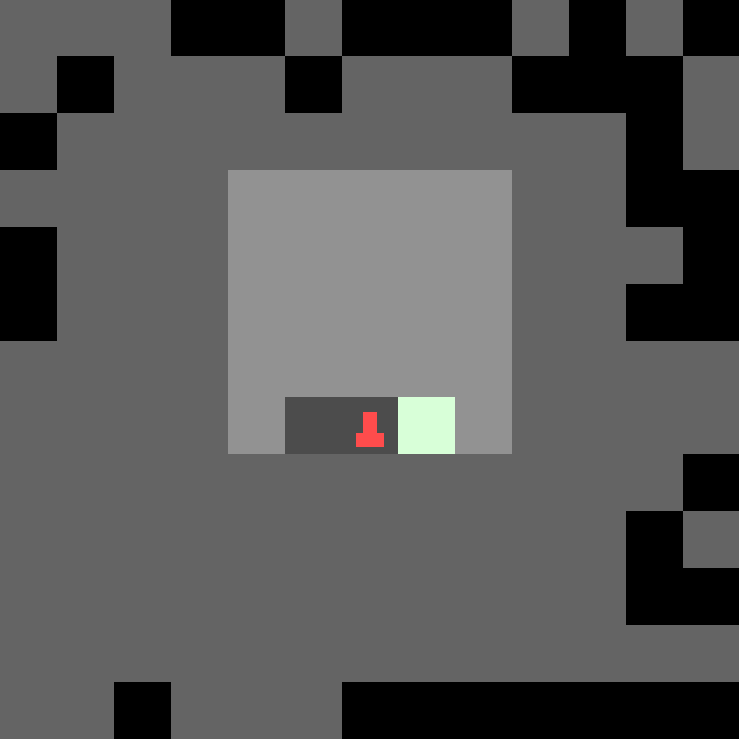}
    \end{subfigure}
    \begin{subfigure}{0.15\linewidth}
        \includegraphics[width=1\linewidth]{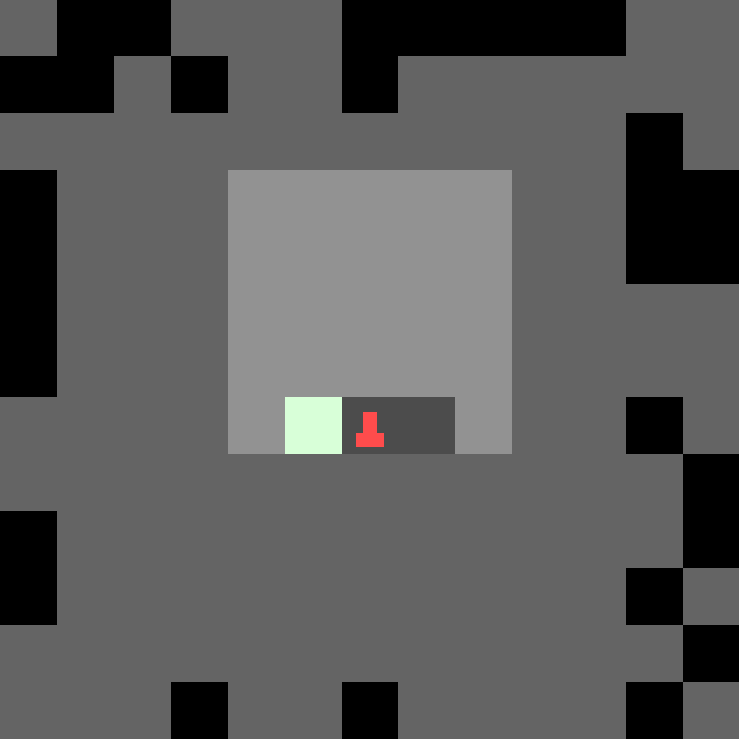}
    \end{subfigure}
    \caption{Several Examples of T-mazes}\label{fig:app:tmazes}
\end{figure}

The mazes are generated using 25 walls sampled IID, similar to prior work~\citep{jiang2021Replayguided}.
We use perfect Monte Carlo regret as the score. This is computed by first finding the shortest path from the agent's start location to the goal, and computing the number of steps that would take. The optimal return is then computed using this. The regret is then computed as the optimal return, minus the average return of the agent on this level.

\paragraph{Agent Architecture}
The agent architecture is similar to that used by prior work~\citep{dennis2020Emergent,jiang2021Replayguided}. In particular, it observes the $5\times 5$ window in front of and including the agent. The agent uses a $3 \times 3$ convolutional layer with 16 channels to process the input image. The agent's direction is embedded into a $5$-dimensional vector. The image embedding is concatenated to the direction embedding and processed using a $256$-feature LSTM~\citep{hochreiter1997Long}.
The output of this is processed by two 32-hidden node fully-connected layers, one for the policy and one for the value estimate.

\paragraph{Evaluation Levels}
We evaluate the agent on a set of held-out standard test mazes used in prior work~\citep{jiang2021Replayguided,holder2022Evolving,jiang2023Minimax}. In particular, we use 
\texttt{SixteenRooms}, \texttt{SixteenRooms2}, \texttt{Labyrinth}, \texttt{LabyrinthFlipped}, \texttt{Labyrinth2}, \texttt{StandardMaze}, \texttt{StandardMaze2}, \texttt{StandardMaze3}, \texttt{SmallCorridor} and \texttt{LargeCorridor}.
\subsubsection{Blindfold}
In the blindfold experiment, levels are generated as normal, and then have a 50\% chance of being ``blindfold'' levels, in which case the agent's observation is filled with zeros.
The agent architecture and evaluation procedure are the same as in the T-maze case.
\subsubsection{Lever Game}
The lever environment is a one-step environment where there are $64$ actions the agent can take. The observation consists of a $65$-dimensional, one-hot-encoded vector. If the first dimension is active, that indicates that the correct answer is hidden. If any of the other $i$ dimensions is activated, then the correct action is $i-1$.
We generate levels by first choosing a correct action and then, with 50\% probability, choosing whether or not the correct answer is visible or not.

The reward for the visible case is $+1$, $-1$ for the correct and the incorrect answers respectively, and $+10$, $-10$ for the invisible case.

The agent's architecture consists of two $256$ hidden node fully-connected layers, and then the same policy and value heads used in the other experiments. For hyperparameters, we use top-k prioritisation with $k=32$, and a buffer size of $64$ for PLR, where ReMiDi has two buffers of size $32$ each.

\subsubsection{Brax}
We use the Brax simulator, and restrict our episodes to only consist of 100 timesteps. The observation space is the standard Brax observations, with a one-hot encoded indication of whether the agent is in a type A or B level. In type B levels, since the joint positions do not matter, they are set to a dummy value of $-1$.
The reward for type B levels is either $+10000$ or $-10000$ depending on if the agent performs the correct (but unobserved) action. Correct here means that the sign of the final action dimension is either positive or negative, depending on the level.

Since we cannot compute the optimal policy in a closed-form manner, we use the heuristic of setting the optimal return per level to be the same as obtained by a policy trained to convergence without disturbances. Since $\pi_\theta$ has access to the level, it can apply the necessary corrections to account for the particular disturbances of level $\theta$.

Finally, during training, we divide the reward obtained by 10 (as it can become quite large), and use a running observation normalisation.
We use the fully connected network architecture inspired by PureJaxRL~\citep{lu2022Discovered}. The network has two fully connected layers, then a recurrent cell, followed by two more fully connected layers for the actor and critic.
\subsection{Hyperparameter Tuning}
\cref{table:ued:hyperparams} contains the hyperparameters we used for our experiments.

We tuned $\text{PLR}^\perp$'s hyperparameters by training on standard mazes and choosing the hyperparameters that obtained the best performance over the evaluation levels. We used these for the minigrid experiments, and made slight changes for the simpler lever game.
We performed a grid search over entropy coefficient of $\{ 0.01, 0.0 \}$, learning rate of $\{ 0.001, 0.0001 \}$, $\lambda$ of $\{ 0.95, 0.98 \}$, temperature of $\{ 0.3, 1.0 \}$ and replay rate of $\{ 0.5, 0.8 \}$. We searched over the same hyperparameters for the Brax experiments, separately for each environment. We use the same parameters for \remidi and minimax return as for PLR.

We used the same hyperparameters for ReMiDi, except that we have an inner buffer size of 256, and 4 outer buffers for the minigrid experiments. The inner buffer size of 256 performed better than a larger buffer of 1000 but is computationally more efficient.
\begin{table}[h!]
    \caption{\small{Hyperparameters}}
    \label{table:ued:hyperparams}
    \begin{center}
    \scalebox{1.0}{
        \begin{tabular}{lrrr}
        \toprule
        \textbf{Parameter}              & Minigrid  & Lever      & Brax \\
        \midrule
        \textbf{PPO}                      &           &           &  \\
        Number of Updates               & 30000     & 2500      & 200 \\
        $\gamma$                        & 0.995     &           & 0.99 \\
        $\lambda_{\text{GAE}}$          & 0.95      &           & 0.95 \\
        PPO number of steps             & 256       &           & 100 \\
        PPO epochs                      & 5         &           & 4 \\
        PPO minibatches per epoch       & 1         &           & 32 \\
        PPO clip range                  & 0.2       &           & \\
        PPO \# parallel environments    & 32        &           & 256 \\
        Adam learning rate              & 0.001     &           & 0.0001 \\
        Anneal LR                       & yes       &           & \\
        Adam $\epsilon$                 & 1e-5      &           & \\
        PPO max gradient norm           & 0.5       &           & \\
        PPO value clipping              & yes       &           & \\
        return normalization            & no        &           & \\
        value loss coefficient          & 0.5       &           & \\
        entropy coefficient     & 0.0       &                   & \\
        \textbf{PLR}                      &           &           & \\
        Replay rate, $p$                & 0.8       & 0.8       & 0.8 \\
        Buffer size, $K$                & 4000      & 64        & 512 \\
        Scoring function                & Perfect   & Perfect   & Perfect \\
        Prioritisation                  & Rank      & Top K     & Rank \\
        Temperature, $\beta$            & 1.0       & -         & 1.0 \\
        $k$                             & -         & 32        & - \\
        Staleness coefficient           & 0.3       & 0.3       & 0.3 \\
        \textbf{ReMiDi}                   &           &           & \\
        Number of Adversaries           & 4         & 2         & 2   \\
        Inner Buffer Size               & 256       & 32        & 512 \\
        Number of Replays per adversary & 1000      & 1000      & 25 \\
        \bottomrule 
        \end{tabular}}
    \end{center}
\end{table}

\end{document}